\documentclass{article}

\PassOptionsToPackage{numbers, compress}{natbib}
\usepackage[preprint]{neurips_2026}

\usepackage[utf8]{inputenc} 
\usepackage[T1]{fontenc}    
\usepackage{hyperref}       
\usepackage{url}            
\usepackage{booktabs}       
\usepackage{amsfonts}       
\usepackage{amsmath}
\usepackage{amssymb}
\usepackage{amsthm}
\usepackage{nicefrac}       
\usepackage{microtype}      
\usepackage{xcolor}         
\usepackage{multicol,multirow}

\theoremstyle{plain}
\newtheorem{theorem}{Theorem}[section]

\newtheorem{corollary}[theorem]{Corollary}
\theoremstyle{definition}

\theoremstyle{remark}

\usepackage{graphicx}
\usepackage{algorithm}
\usepackage{algorithmic}

\title{A Scalable Nonparametric Continuous-Time Survival Model through Numerical Quadrature}

\author{%
  Chaeyeon Lee \\
  Department of Statistics \\
  Ewha Womans University \\
  Seoul, Korea \\
  \And
  Sehwan Kim \\
  Department of Statistics \\
  Ewha Womans University \\
  Seoul, Korea \\
  \And
  Hyungrok Do \\
  Department of Population Health \\
  NYU Grossman School of Medicine \\
  New York, NY, USA \\
}

\begin{document}

\maketitle

\begin{abstract}
  Flexible continuous-time survival modeling is critical for capturing complex time-varying hazard dynamics in high-dimensional data; however, training such models remains challenging due to the intractable integral required for likelihood estimation. We introduce QSurv, a scalable deep learning framework that enables nonparametric continuous-time modeling without relying on time discretization or restrictive distributional assumptions. We propose a training objective based on Gauss-Legendre numerical quadrature, which approximates the cumulative hazard with high-order accuracy while facilitating efficient end-to-end training via standard backpropagation. Furthermore, to effectively capture non-stationary hazard dynamics in complex architectures, we introduce time-conditioned low-rank adaptation, a mechanism that conditions general neural backbones on time by dynamically modulating weights via low-rank updates. We provide theoretical analysis establishing approximation error bounds for cumulative-hazard evaluation. Comprehensive experiments across synthetic benchmarks, large-scale real-world tabular datasets, and high-dimensional medical imaging tasks demonstrate that QSurv achieves competitive predictive performance with advantages in instantaneous hazard function estimation, enabling more interpretable characterization of time-varying risk patterns.
\end{abstract}

\section{Introduction}

Survival analysis models the time to an event of interest and is fundamental to medical prognosis, engineering reliability, and financial risk stratification. Its primary challenge lies in handling censoring, where the event time is not fully observed for some subjects within the study period. In this work, we focus on right-censoring, which is the most prevalent form of censoring in practice. Classical methods such as the Kaplan-Meier (KM) estimator \citep{KM1958KM} and the Cox proportional hazards (PH) model \citep{cox1972coxph} are widely used in survival analysis. The KM estimator serves as a nonparametric approach to estimating the population-level survival function, whereas the Cox PH model provides a semiparametric framework linking covariates to the hazard via a log-linear predictor. While these methods remain standard tools owing to their simplicity and interpretability, they are often insufficient to capture the complex, high-dimensional interactions between covariates and time-to-event. Although machine learning approaches like Random Survival Forests (RSF) \citep{Hemant2008RSF} and classification-based frameworks such as survival stacking \citep{craig2021survivalstacking} offer improved risk prediction over classical methods, they remain limited in their ability to directly learn representations from unstructured and high-dimensional data, such as electronic health records and medical imaging, and can impose a substantial computational burden at scale.

To address these limitations, the field has increasingly turned to deep learning. One prominent line of work preserves the Cox framework while using a neural network to learn a nonlinear risk function (e.g., DeepSurv \citep{katzman2018deepsurv}). Another common direction adopts a discrete-time formulation, predicting hazards or event probabilities on a fixed time grid to enable flexible training objectives (e.g., DeepHit \citep{lee2018deephit}). Alternatively, parametric deep models support likelihood-based training but constrain the hazard through a chosen distribution family (e.g., Deep Survival Machines \citep{nagpal2021deep_dsm}). Notably, these approaches typically rely on structural assumptions about the survival function or hazard. Cox-based neural models primarily learn a relative-risk score under proportional hazards, discrete-time models represent risk on a prespecified time grid, and parametric models restrict the hazard through a chosen distributional family. Thus, their limitations are not only computational or predictive, but also concern how directly they characterize the continuous-time instantaneous hazard.

Direct characterization of the instantaneous hazard is clinically important because the hazard function is the most natural object for describing time-resolved risk. While the survival function summarizes accumulated risk up to time $t$, the instantaneous hazard $h(t | x)$ describes how risk changes locally among individuals who remain at risk. Many clinical settings exhibit such time-local structure, including delayed treatment effects, transient early toxicity, crossing hazards, late recurrence, and decaying or accelerating risks over follow-up. These patterns are difficult to summarize by a single proportional-hazards effect and may be obscured when models focus only on survival probabilities or fixed-grid event probabilities. Prior work has therefore emphasized both the limitations of routine hazard-ratio reporting under non-proportional hazards \citep{hernan2010hazards,stensrud2020test} and the practical value of examining hazard functions to understand disease dynamics and recurrence patterns \citep{royston2001flexible,hess2014getting}.

Despite this need, fully nonparametric deep learning for continuous-time hazard modeling remains underexplored because the survival likelihood requires both pointwise hazard evaluation and cumulative-hazard integration. Directly parameterizing $\lambda_\theta(t|x)$ with a flexible time-conditioned network is therefore computationally challenging. In our setting, the integrand is the modeled hazard $\lambda_\theta(t|x)$, so the quadrature error is controlled by its temporal smoothness. Existing deep survival models often avoid this difficulty through proportional-hazards, discrete-time, or parametric assumptions. More flexible continuous-time approaches have recently been proposed, but SODEN \citep{tang2022soden} depends on adaptive ODE solvers, while DeSurv \citep{danks2022desurv} models the cumulative distribution function rather than the hazard as the primitive quantity. Thus, scalable neural survival modeling that directly parameterizes the instantaneous hazard remains underdeveloped.

Beyond the integration challenge, direct hazard modeling also raises the question of how time should enter the network. Since $h(t | x)$ is a local-in-time risk function, the model must represent interactions between covariates $x$ and follow-up time $t$, rather than treating time as a passive index. A standard approach is to concatenate $t$ with $x$ at the input, which is often adequate for tabular data processed by multilayer perceptrons (MLPs); however, for more expressive backbones such as ResNet \citep{he2016deep}, DenseNet \citep{huang2017densely} and Transformers \citep{vaswani2017attention, dosovitskiy2020image}, simple input concatenation may be insufficient. A more effective strategy is to let temporal information modulate intermediate feature representations, enabling the model to capture time-varying risk more directly.

To address the gap between scalable deep learning and direct hazard modeling, we introduce Quadrature-based Nonparametric Continuous-Time Deep Survival Analysis (QSurv). QSurv parameterizes the instantaneous hazard $\lambda_\theta(t | x)$ as a flexible function of time and covariates, and uses numerical quadrature to evaluate the cumulative hazard required by the continuous-time likelihood. In this way, QSurv makes the hazard function the primary modeled object while avoiding the computational burden of adaptive ODE solvers and the precision loss of time discretization. Our main contributions are summarized as follows:

\begin{itemize}
    \item We propose a nonparametric continuous-time deep survival model that directly parameterizes the instantaneous hazard as a flexible function of time and covariates, enabling time-resolved characterization of risk without fixed time discretization.
    \item We introduce a likelihood objective based on numerical quadrature. This eliminates the need for ODE solvers, enabling efficient end-to-end training with standard backpropagation and minibatch stochastic gradient descent.
    \item We move beyond simple input concatenation by proposing time-conditioned low-rank approximation (Time-LoRA), which conditions general neural network backbones on $t$ by dynamically modulating network weights, resulting in richer temporal representations.
    \item We perform comprehensive experiments across synthetic benchmarks, real-world tabular data, and high-dimensional medical imaging tasks. Our results confirm QSurv's efficacy and feasibility compared to state-of-the-art baselines.
\end{itemize}

\section{Related Works}\label{sec:2}

Integrating deep learning approaches into survival analysis has led to a wide range of methodological advances. While these deep learning-based survival analysis methods can be categorized in several ways, to keep the discussion focused, we organize the literature into two broad settings: (1) discrete-time models and (2) continuous-time models. For a comprehensive review, see \citep{Wiegrebe2024review} and the references therein.
    
In discrete-time survival models, time is partitioned into a set of intervals, and the model outputs interval-specific hazards or a discrete probability distribution over discretized time points. Multi-Task Logistic Regression (MTLR) \citep{yu2011learning} casts survival estimation as a sequence of binary classification tasks, one for each time interval, using logistic regression. This framework has been extended to neural MTLR (N-MTLR) to capture nonlinear effects \citep{fotso2018deep}. DeepHit \citep{lee2018deephit} similarly models discrete-time event probabilities with a neural network and accommodates competing risks by incorporating an additional ranking-based loss. In contrast, Nnet-survival estimates the conditional hazard probabilities in discrete time points \citep{gensheimer2019scalable}.

For continuous-time survival models, existing deep learning approaches can be broadly grouped into three directions.

\textbf{Cox models:} Many methods start from the Cox proportional hazards (PH) framework and relax parts of the original formulation to improve flexibility. A widely used example is DeepSurv, which replaces the linear predictor in the Cox model with a neural network \citep{katzman2018deepsurv}. Several extensions have been proposed. CoxCC modifies the loss to mimic a case–control sampling scheme, and CoxTime allows the hazard to depend on time through a time-varying effect model \citep{kvamme2019time}. Deep Cox Mixtures further extends DeepSurv by modeling heterogeneous subpopulations via a mixture formulation \citep{nagpal2021deep_dcm}. More generally, the Deep Extended Hazard model provides a flexible hazard formulation that generalizes the baseline hazard structure \citep{zhong2021deep}. 

\textbf{Parametric survival models:} Another line of work adopts fully parametric specifications, where the survival time is modeled as a function of covariates and error term, typically assuming distributions such as Weibull or log-normal. Neural networks are then used to learn covariate-dependent parameters \citep{Ranganath2016deepsurv,Bennis2020wei,ava2020lognormal,nagpal2021deep_dsm}. On the other hand, Survival Mixture Density Networks model the event-time density using a Gaussian mixture and use a neural network to map covariates to the mixture parameters \citep{han2022survival}.

\textbf{ODE-based models:} More recently, ODE-based approaches have been proposed that model the evolution of the cumulative hazard (or related quantities) via an ordinary differential equation \citep{tang2022soden,tang2023survival}. This yields a principled continuous-time formulation capable of capturing complex temporal dynamics. A key limitation, however, is that these models can be computationally very expensive to train, and they also introduce additional hyperparameters related to ODE-solver. A related approach, DeSurv \citep{danks2022desurv}, formulates an ODE for the cumulative distribution function and discretizes it via Gauss–Legendre quadrature, but recovers the event density via time differentiation rather than directly modeling the hazard.

The literature highlights key limitations that motivate our work. Existing methodologies often impose structural assumptions on the hazard or event-time distribution: discrete-time models depend on a chosen time grid, Cox models inherit restrictions from the PH formulation, and parametric models rely on specific distributional assumptions. While ODE-based formulations offer flexibility in modeling hazard functions, they incur significant computational costs, and the existing quadrature-based alternative bypasses the hazard altogether by modeling the cumulative distribution function. This leaves a clear gap: scalable, fully nonparametric deep learning that directly models the instantaneous hazard remains underexplored. Yet the instantaneous hazard is itself a clinically meaningful quantity, providing a direct measure of risk at any given time that informs time-sensitive decisions such as patient monitoring and intervention timing.

\section{Problem Formulation}

Let $T, C \in \mathbb{R}_+$ denote the random variables for the true time-to-event and censoring time, respectively. We consider a setting with covariates $X \in \mathcal{X} \subseteq \mathbb{R}^d$. Under the assumption of covariate-dependent censoring, we assume that the event time and censoring time are conditionally independent given the covariates, i.e., $T \perp C | X$. In the observational setting, the true event time $T$ is not always observed directly due to censoring. Instead, we observe the pair $(O, \Delta)$, where $O = \min(T, C)$ denotes the observed time and $\Delta = \mathbb{I}(T \leq C)$ is the binary event indicator. Given a dataset of $n$ i.i.d. realizations $\mathcal{D} = \{(x_i, o_i, \delta_i)\}_{i=1}^{n}$, our primary objective is to learn the parameters $\theta$ of the conditional hazard function $\lambda_\theta(t|x)$. More broadly, however, survival analysis aims to characterize the full time-to-event distribution, which may involve estimating the survival function $S_\theta(t|x)$, the cumulative hazard $\Lambda_\theta(t|x)$, or discrete-time event probabilities.

The hazard function $\lambda_\theta(t|x)$ fully characterizes the time-to-event distribution. It is linked to the cumulative hazard $\Lambda_\theta(t|x)$ and the survival function $S_\theta(t|x)$ through the following equations:
\begin{align}
    \Lambda_\theta(t|x) = \int_0^t \lambda_\theta(u|x) du, \qquad S_\theta(t|x) = \exp\left(-\Lambda_\theta(t|x)\right).
\end{align}
To estimate the model parameters $\theta$, we minimize the negative log-likelihood (NLL). Under the standard assumption of covariate-dependent censoring ($T \perp C | X$), the terms governing the censoring distribution are independent of $\theta$ and factor out of the optimization. This yields the standard survival analysis objective function:
\begin{equation}\label{eqn:surv-nll}
    \mathcal{L}(\theta) = -\frac{1}{n}\sum_{i=1}^n \left[ \delta_i \log \lambda_\theta(o_i|x_i) - \Lambda_\theta(o_i|x_i) \right].
\end{equation}
Minimizing this objective with respect to $\theta$ requires a modeling approach that is both expressive enough to capture complex temporal dynamics and computationally tractable for evaluating the cumulative hazard integral. Existing literature navigates this trade-off through various approximations. This objective exposes the central computational difficulty in direct neural network parameterized hazard modeling: the event contribution requires evaluating the hazard locally at $o_i$, whereas the survival contribution requires integrating the same learned hazard over the full interval $[0,o_i]$. QSurv is designed to preserve the hazard as the primitive modeled quantity while making this cumulative-hazard evaluation efficient and differentiable.


\subsection{Quadrature-based Nonparametric Continuous-Time Deep Survival Analysis (QSurv)}
We model the log-hazard with a network $f_\theta:\mathcal{X}\times\mathbb{R}_+\to\mathbb{R}$ and set $\lambda_\theta(t|x)=\exp\{f_\theta(x,t)\}$. Minimizing the negative log-likelihood requires evaluating the cumulative hazard $\Lambda_\theta(o_i | x_i) = \int_0^{o_i} \lambda_\theta(s | x_i) \, ds$. While the log-hazard requires only a single forward pass, the cumulative hazard is analytically intractable due to the lack of parametric structure. To resolve this, we approximate $\Lambda_\theta(t | x)$ using $K$-point Gauss-Legendre quadrature. By applying the change of variables $s = t\tau$ to map the integration domain $[0, t]$ to the canonical interval $[0, 1]$, we obtain the approximation:
\begin{equation} \label{eqn:quadrature_approx}
\begin{aligned}
    \hat{\Lambda}_\theta(t | x) = \frac{t}{2} \sum_{k=1}^K w_k\, \lambda_\theta(t\tau_k | x),
\end{aligned}
\end{equation}
where $\tau_k = (\xi_k + 1)/2$ are the Gauss-Legendre nodes shifted from the standard interval $[-1, 1]$ to $[0, 1]$ via the affine map, and $\{w_k\}_{k=1}^K$ are the corresponding standard Gauss-Legendre weights satisfying $\sum_k w_k = 2$. The factor $t/2$ accounts for the Jacobian of the change of variables $s = t\tau$ combined with the $[-1, 1] \to [0, 1]$ affine transformation.

Unlike standard fixed-grid approximations, such as Riemann sums or the trapezoidal rule, which are constrained to equidistant points, Gauss-Legendre quadrature optimizes node locations to maximize precision. The standard nodes are derived from the roots of the $K$-th degree Legendre polynomial $P_K(\xi)$ and are mapped to our target domain $[0, 1]$ via the affine transformation $\tau_{k} = (\xi_{k} + 1)/2$. The resulting rule is exact for polynomials of degree up to $2K-1$, and therefore achieves high accuracy for sufficiently smooth hazards with relatively few evaluations. Consequently, the approximation error is small for smooth hazard functions even with a small $K$. Crucially, this approximation is implemented as a vectorized set of $K$ network evaluations that remains fully differentiable with respect to $\theta$, enabling end-to-end training via standard backpropagation with a time complexity of $\mathcal{O}(K)$ per cumulative hazard evaluation.

Importantly, compared to these Riemann or trapezoidal approximations, Gauss-Legendre quadrature is not tied to a dense time discretization. While grid-based methods require a large number of points to minimize discretization error, scaling computational cost linearly with the desired resolution, Gauss-Legendre leverages optimal node placement to achieve superior accuracy with a fixed, low budget of $K$ evaluations. The parameters of QSurv are learned by minimizing the negative log-likelihood of the observed data. Given a dataset the objective function is defined as:
\begin{equation}
    \mathcal{L}(\theta;\mathcal{D}) = -\frac{1}{n}\sum_{i=1}^{n} \Big[ \delta_i f_{\theta}(x_i, o_i) - \frac{o_i}{2} \sum_{k=1}^{K} w_k \exp f_{\theta}(x_i, o_i \tau_k ) \Big],
\end{equation}
where the first term captures the instantaneous risk for observed events, and the second term, approximated via the quadrature method described above. Algorithm \ref{alg:training} describes the training procedure for QSurv.

\begin{algorithm}[t]
\caption{Training procedure for QSurv}
\label{alg:training}
\begin{algorithmic}[1]
\STATE \textbf{Require}: Dataset $\mathcal{D} = \{(x_i, o_i, \delta_i)\}_{i=1}^{n}$, Gauss-Legendre nodes $\boldsymbol{\tau} \in [0,1]^K$ and weights $\mathbf{w} \in \mathbb{R}^K$.
\STATE \textbf{Initialize} network parameters $\theta$.
\WHILE{not converged}
    \STATE Sample minibatch $\mathcal{B} \subset \mathcal{D}$ of size $B$.
    \FORALL{$(x_i, o_i, \delta_i) \in \mathcal{B}$ \textbf{in parallel}}
        \STATE Compute log-hazard by feed-forward: $f_\theta(x_i, o_i)$
        \STATE Estimate cumulative hazard (vectorized): $\hat{\Lambda}_i \leftarrow \tfrac{o_i}{2} \cdot \mathbf{w}^\top \exp\!\big(f_\theta(x_i, o_i \boldsymbol{\tau})\big)$.
    \ENDFOR
    \STATE Compute loss: $\mathcal{L}(\theta; \mathcal{B}) \leftarrow -\frac{1}{|\mathcal{B}|} \sum_{(x_i,o_i,\delta_i)\in\mathcal{B}} \left[ \delta_i f_\theta(x_i,o_i)-\hat{\Lambda}_i \right]$.
    \STATE Update parameters: $\theta \leftarrow \theta - \eta \nabla_\theta \mathcal{L}(\theta; \mathcal{B})$
\ENDWHILE
\end{algorithmic}
\end{algorithm}

\subsection{Approximation Error Bound}
The validity of QSurv relies on the convergence of the quadrature approximation to the exact cumulative-hazard integral induced by the modeled hazard. This follows from classical properties of Gauss-Legendre quadrature: the $K$-point rule is exact for polynomials of degree up to $2K-1$ and admits a high-order error bound for sufficiently smooth integrands \citep{golub1969calculation,davis2007methods}. In our setting, the integrand is the neural network parameterized hazard $\lambda_\theta(t | x)$, so the quadrature error is controlled by the temporal smoothness of the learned hazard. We state the resulting bound below and defer the proof, which is a direct application of the standard Gauss-Legendre error formula after an affine transformation from $[-1,1]$ to $[0,t]$, to Appendix~\ref{appendix:proofs}.

\begin{theorem}[Approximation Error Bound of Cumulative Hazard Function]\label{thm:error_bound}
Let the hazard function $\lambda_\theta(\cdot|x)$ be $2K$-times continuously differentiable on $[0, t_{\max}]$. Let $\Lambda_\theta(t|x)$ denote the exact cumulative hazard induced by $\lambda_\theta$ and $\hat{\Lambda}_{\theta, K}(t|x)$ be the approximation computed using $K$-point Gauss-Legendre quadrature. Then, for any $t \in [0,t_{\max}]$, the approximation error is bounded by:
\begin{equation}
\begin{aligned}
    \varepsilon_K(t|x) = \Big| \Lambda_\theta(t|x) - \hat{\Lambda}_{\theta, K}(t|x) \Big| \leq \frac{t^{2K+1} (K!)^4}{(2K+1)[(2K)!]^3} \max_{\tau \in [0,t]} \Big| \lambda_\theta^{(2K)}(\tau|x) \Big|.    
\end{aligned}
\end{equation}
Furthermore, if $\lambda_\theta(\cdot|x)$ is a polynomial of degree at most $2K-1$ with respect to time, the approximation is exact, i.e., $\hat{\Lambda}_{\theta, K}(t|x) = \Lambda_\theta(t|x)$.\end{theorem}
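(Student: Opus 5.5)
We will reduce the claim to the classical error formula for $K$-point Gauss--Legendre quadrature on a general interval $[a,b]$. For $g\in C^{2K}[a,b]$, the rule with nodes $\tfrac{a+b}{2}+\tfrac{b-a}{2}\xi_k$ and weights $\tfrac{b-a}{2}w_k$ satisfies
\begin{equation*}
\int_a^b g(s)\,ds-\frac{b-a}{2}\sum_{k=1}^K w_k\, g\Big(\tfrac{a+b}{2}+\tfrac{b-a}{2}\xi_k\Big)=\frac{(b-a)^{2K+1}(K!)^4}{(2K+1)[(2K)!]^3}\,g^{(2K)}(\eta)
\end{equation*}
for some $\eta\in(a,b)$ \citep{davis2007methods}. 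We apply this with $a=0$, $b=t\le t_{\max}$ and $g(s)=\lambda_\theta(s|x)$, which is $C^{2K}$ on $[0,t]\subseteq[0,t_{\max}]$ by hypothesis.

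The first step is to check that the mapped nodes and weights in \eqref{eqn:quadrature_approx} coincide with those above. With $a=0$ and $b=t$ we have $\tfrac{a+b}{2}+\tfrac{b-a}{2}\xi_k=t(\xi_k+1)/2=t\tau_k$, and the weight factor is $\tfrac{t}{2}w_k$. Hence $\hat\Lambda_{\theta,K}(t|x)$ is exactly the Gauss--Legendre rule on $[0,t]$, while $\Lambda_\theta(t|x)=\int_0^t\lambda_\theta(s|x)\,ds$ by definition. Taking absolute values and bounding $|g^{(2K)}(\eta)|\le \max_{\tau\in[0,t]}|\lambda_\theta^{(2K)}(\tau|x)|$ gives the stated inequality. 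The maximum is attained because $\lambda_\theta^{(2K)}$ is continuous on a compact set. The case $t=0$ is trivial, since both sides vanish.

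For completeness we would sketch where the classical constant comes from, since this is the only nontrivial ingredient. Let $p$ be the Hermite interpolant of $g$ of degree $2K-1$ matching $g$ and $g'$ at the $K$ nodes. Exactness of the rule for degree $\le 2K-1$ gives $\int p=\sum w_k p(\text{nodes})=\sum w_k g(\text{nodes})$. The Hermite remainder is $g(s)-p(s)=\frac{g^{(2K)}(\zeta(s))}{(2K)!}\,\omega(s)^2$, where $\omega$ is the monic node polynomial. Because $\omega^2\ge0$, the integral mean value theorem yields the error $\frac{g^{(2K)}(\eta)}{(2K)!}\int_a^b\omega^2$. The remaining integral is computed from the monic Legendre polynomial norm, $\int_{-1}^1\tilde P_K^2=\frac{2^{2K+1}(K!)^4}{(2K+1)[(2K)!]^2}$, followed by scaling by $((b-a)/2)^{2K+1}$. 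This produces the stated constant. The main obstacle is this norm computation and bookkeeping of the scaling factors. We would cite it rather than rederive it, and just verify that the powers of $2$ cancel.

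Exactness follows immediately. If $\lambda_\theta(\cdot|x)$ is a polynomial of degree at most $2K-1$, then $\lambda_\theta^{(2K)}\equiv0$ and the bound is zero. Equivalently, it follows from degree-$(2K-1)$ exactness of the rule, which is preserved under the affine map because composition with an affine map preserves polynomial degree.
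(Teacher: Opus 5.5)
Your proposal is correct and follows essentially the paper's route. The paper starts from the Gauss--Legendre error formula on $[-1,1]$ and carries out the affine rescaling $s=\tfrac{t}{2}(u+1)$ by hand (the chain rule gives $(t/2)^{2K}$, the Jacobian gives $t/2$, and the powers of $2$ cancel); you instead invoke the already-rescaled $[a,b]$ version with $a=0$, $b=t$, which packages the same computation inside the citation. Your Hermite-interpolation sketch of the constant and your explicit treatment of the polynomial case (via $\lambda_\theta^{(2K)}\equiv 0$) go slightly beyond the paper, whose proof never separately addresses the exactness clause.
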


Theorem~\ref{thm:error_bound} shows that the quadrature error is controlled by the number of nodes $K$ and the temporal smoothness of the learned hazard. For sufficiently smooth hazards, the factorial term in the bound yields rapid error decay as $K$ increases, and the Gauss--Legendre rule is exact for hazards that are polynomial in time of degree at most $2K-1$. Thus, a moderate fixed value of $K$ can provide accurate cumulative-hazard estimates in practice, as confirmed empirically in Appendix~\ref{appendix:num_nodes}. The result also motivates using smooth hazard parameterizations, such as networks with Softplus or GELU activations, since high-order quadrature accuracy relies on the existence of temporal derivatives. Overall, $K$ should be viewed as a numerical precision parameter rather than a model-complexity or regularization parameter.

We establish a direct link between the numerical precision of the integration scheme and the training objective. Specifically, we show that the error in the negative log-likelihood is bounded by the error in the cumulative hazard approximation.

\begin{corollary}[Propagation of Approximation Error]\label{cor:error_prop}
    Let $\mathcal{L}(\theta; t, \delta, x) = -\delta \log \lambda_\theta(t|x) + \Lambda_\theta(t|x)$ be the true negative log-likelihood for a sample with time $t$, event indicator $\delta$, and features $x$. Let $\hat{\mathcal{L}}_K$ denote the approximate loss computed using the quadrature estimate $\hat{\Lambda}_{\theta,K}$. Then the loss approximation error equals the cumulative hazard approximation error:
    \begin{equation}
    \big| \mathcal{L}(\theta; t, \delta, x) - \hat{\mathcal{L}}_K(\theta; t, \delta, x) \big| = \varepsilon_K(t|x).
    \end{equation}
\end{corollary}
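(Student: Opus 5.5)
The plan is to write both losses out explicitly and subtract them, so that everything except the cumulative-hazard term cancels. By definition the approximate loss $\hat{\mathcal{L}}_K$ is obtained from $\mathcal{L}$ by replacing only $\Lambda_\theta(t|x)$ with $\hat{\Lambda}_{\theta,K}(t|x)$. The event term $-\delta\log\lambda_\theta(t|x) = -\delta f_\theta(x,t)$ is computed exactly by a single forward pass in both losses, with no quadrature involved. So I would first record
\begin{equation*}
\hat{\mathcal{L}}_K(\theta; t, \delta, x) = -\delta \log \lambda_\theta(t|x) + \hat{\Lambda}_{\theta,K}(t|x),
\end{equation*}
matching the per-sample summand of the QSurv objective with $\hat{\Lambda}_{\theta,K}$ given by \eqref{eqn:quadrature_approx}.

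Next I would subtract. The terms $-\delta\log\lambda_\theta(t|x)$ are identical and cancel for either value of $\delta\in\{0,1\}$, which leaves
\begin{equation*}
\mathcal{L}(\theta; t, \delta, x) - \hat{\mathcal{L}}_K(\theta; t, \delta, x) = \Lambda_\theta(t|x) - \hat{\Lambda}_{\theta,K}(t|x).
\end{equation*}
Taking absolute values gives exactly $\varepsilon_K(t|x)$ as defined in Theorem~\ref{thm:error_bound}, which is the claimed equality.

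As a final remark, under the smoothness hypothesis of Theorem~\ref{thm:error_bound}, the same factorial bound then transfers verbatim to the loss error. Averaging over samples bounds the full-objective error by the mean of the per-sample $\varepsilon_K(o_i|x_i)$, using the triangle inequality; for the dataset objective this is an inequality rather than an equality.

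The only step needing care is to confirm that the event term really involves no approximation. Because $\lambda_\theta = \exp f_\theta$ is evaluated pointwise at $t$, this holds, so I expect no genuine obstacle.
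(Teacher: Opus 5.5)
Your proposal is correct and follows essentially the same argument as the paper: the event term $-\delta\log\lambda_\theta(t|x)$ is computed exactly by the forward pass in both losses and cancels, leaving $|\Lambda_\theta(t|x)-\hat{\Lambda}_{\theta,K}(t|x)| = \varepsilon_K(t|x)$. Your extra remark that the averaged dataset objective satisfies only an inequality, via the triangle inequality, is a correct refinement that the paper does not state.
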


This result relates the stability of the learning process to the integration accuracy. Because the cumulative hazard enters the negative log-likelihood linearly, the approximation error is additive and does not compound through non-linear transformations. Consequently, the bound $\varepsilon_K(t|x)$ translates directly to the loss function. When combined with the convergence properties established in Theorem \ref{thm:error_bound}, this suggests that controlling the quadrature error effectively limits the deviation of the training objective from the true likelihood, allowing the approximate loss to serve as a reliable proxy during optimization.

\subsection{Choice of the Number of Quadrature Nodes}

The number of quadrature nodes $K$ controls the numerical accuracy of the cumulative-hazard approximation and the computational cost of training. Importantly, $K$ is not a time-discretization parameter in the same sense as the number of bins in discrete-time survival models. QSurv still represents the hazard as a continuous function of time; $K$ only determines how accurately the integral of this hazard is approximated inside the likelihood. Very small values of $K$ may provide an inaccurate approximation when the hazard varies rapidly over time, which can distort the training objective. Increasing $K$ improves the fidelity of the quadrature approximation but requires additional evaluations of the neural network parameterized hazard at the quadrature nodes. Thus, $K$ should be interpreted as a numerical precision parameter rather than a regularization hyperparameter: reducing $K$ does not impose a simpler hazard model, but instead coarsens the approximation to the cumulative hazard. In practice, we use a moderate fixed value of $K$ and examine sensitivity to this choice empirically in Section \ref{appendix:num_nodes}.

\subsection{Time-Conditioned LoRA}
For tabular data, a common strategy to model time-varying hazards is to treat time $t$ as an additional input feature, simply concatenating it with the static covariates $x$, e.g., as in CoxTime \citep{kvamme2019time}, DeSurv \citep{danks2022desurv}, and SODEN \citep{tang2022soden}. However, this approach does not trivially extend to high-dimensional data such as medical images processed by backbones like ResNet \citep{he2016deep}, DenseNet \citep{huang2017densely} and Transformers \citep{vaswani2017attention, dosovitskiy2020image}. Concatenating a scalar $t$ directly to an image input is not naturally compatible with convolutional or transformer backbones.  To address this, we introduce time-conditioned LoRA, which uses a time-dependent weight matrix at the network's penultimate layer to make the hazard prediction adaptive in $t$ without re-running the backbone. This avoids $K$ redundant forward passes of the heavy backbone, substantially reducing the computational cost compared with fully time-entangled architectures.

For this penultimate layer with weight $W \in \mathbb{R}^{d_{\text{out}} \times d_{\text{in}}}$ and bias $b \in \mathbb{R}^{d_{\text{out}}}$, we parameterize the time-varying weight $W(t)$ as a low-rank perturbation of the static base \cite{hu2022lora}:
\begin{equation}
    W(t) = W + \Delta W(t) = W + U \cdot \text{diag}(s(t)) \cdot V,
\end{equation}
where $U \in \mathbb{R}^{d_{\text{out}} \times r}$ and $V \in \mathbb{R}^{r \times d_{\text{in}}}$ are learnable low-rank projection matrices ($r \ll \min(d_{\text{in}}, d_{\text{out}})$). The temporal dynamics are captured by the modulation coefficients $s(t) = g_\psi(\phi(t)) \in \mathbb{R}^r$, which are generated by a lightweight MLP $g_\psi$ acting on a time embedding $\phi(t)$ \cite{kazemi2019time2vec}.

The forward pass can be re-interpreted as a time-gated residual branch operating within a learned subspace:
\begin{equation}
    z(t) = Wh + U \big( s(t) \odot (Vh) \big) + b.
\end{equation}
Here, $V$ projects the input $h$ into a latent subspace, $s(t)$ applies a time-varying activation, and $U$ projects the result back to the output dimension; a final time-independent linear layer then maps $z(t)$ to the scalar log-hazard $f_\theta(x,t)$. This structural decoupling yields a significant computational advantage during numerical integration, where the hazard must be evaluated at multiple quadrature nodes $\{\tau_{k}\}_{k=1}^K$ for a single subject. Because Time-LoRA is applied only at the penultimate layer, the embedding $h$ is independent of $t$, so the base computation $Wh$ and the initial projection $Vh$ depend solely on the static input $x$ and are computed only once and cached. For each quadrature node $\tau_k$, only the lightweight temporal branch is evaluated to obtain $s(o_i \tau_k)$, followed by the low-rank recombination. This avoids $K$ redundant forward passes of the heavy backbone, drastically reducing the inference cost compared to fully time-entangled architectures. This caching is only valid because the modulation is confined to the head: applying Time-LoRA to an intermediate backbone layer would make the inputs to all downstream layers time-dependent and require a full backbone pass per quadrature node, which is why we restrict it to the penultimate layer. Unlike FiLM, which applies element-wise affine modulation, Time-LoRA allows time-dependent low-rank mixing across feature channels, providing a more expressive head for modeling time-varying hazards \cite{dumoulin2018feature}.

\section{Experiments}

We performed a series of experiments across a comprehensive set of benchmark datasets to evaluate the proposed methods. To ensure a rigorous comparison across different modeling paradigms, we evaluated against Cox-style deep learning models CoxCC and CoxTime \citep{kvamme2019time}, discrete-time models DeepHit \citep{lee2018deephit} and NnetSurv \citep{gensheimer2019scalable}, and continuous-time estimators including Survival MDN \citep{han2022survival}, DeSurv \citep{danks2022desurv}, and SODEN \citep{tang2022soden}. For a controlled comparison of time-conditioned continuous-time architectures, CoxTime, DeSurv, and SODEN were implemented with the same time-conditioned LoRA mechanism used in QSurv. Details of the deep survival models are provided in Appendix \ref{appendix:models}. Code for reproducing the models and experiments is available at \texttt{https://github.com/hyungrok-do/qsurv}.

\subsection{Simulation Study}

\begin{table*}[!t]
    \small
    \centering
    \setlength{\tabcolsep}{4pt}
    \caption{Comparison of $L_1$ error of instantaneous hazard (smaller the better) across different parametric distributions with standard deviations over 20 random seeds. Best results are \textbf{bold}, second best are \underline{underlined}. Full results are presented in Table \ref{tab:simulation_full} in the appendix.}
    \label{tab:simulation}
    \resizebox{\linewidth}{!}{%
    \begin{tabular}{cccccccc}
    \toprule
    \textbf{Distribution} & \textbf{CoxCC} & \textbf{CoxTime} & \textbf{NnetSurv} & \textbf{MDN} & \textbf{DeSurv} & \textbf{SODEN} & \textbf{QSurv} \\
    \midrule
    Exponential  & 0.1775 $\pm$ 0.0410 & 0.1775 $\pm$ 0.0422 & 0.3830 $\pm$ 0.0438 & 0.0568 $\pm$ 0.0446 & 0.0448 $\pm$ 0.0159 & \textbf{0.0330 $\pm$ 0.0138} & \underline{0.0352 $\pm$ 0.0164} \\
    Weibull      & 0.1576 $\pm$ 0.0322 & 0.1182 $\pm$ 0.0207 & 0.2383 $\pm$ 0.0254 & 0.0644 $\pm$ 0.0102 & 0.0295 $\pm$ 0.0090 & \textbf{0.0272 $\pm$ 0.0119} & \underline{0.0282 $\pm$ 0.0139} \\
    Gamma        & 0.9012 $\pm$ 0.0921 & 0.5661 $\pm$ 0.1769 & 0.7249 $\pm$ 0.0370 & 0.5646 $\pm$ 0.0459 & 0.2930 $\pm$ 0.0907 & \textbf{0.1716 $\pm$ 0.0901} & \underline{0.1718 $\pm$ 0.0859} \\
    Gompertz     & 0.1713 $\pm$ 0.0262 & 0.1890 $\pm$ 0.0549 & 0.2921 $\pm$ 0.0276 & 0.0920 $\pm$ 0.0140 & 0.0502 $\pm$ 0.0278 & \underline{0.0262 $\pm$ 0.0174} & \textbf{0.0258 $\pm$ 0.0180} \\
    Log-normal   & 0.1235 $\pm$ 0.0248 & 0.1114 $\pm$ 0.0294 & 0.1938 $\pm$ 0.0219 & 0.1882 $\pm$ 0.0520 & 0.0473 $\pm$ 0.0201 & \underline{0.0462 $\pm$ 0.0183} & \textbf{0.0457 $\pm$ 0.0185} \\
    Log-logistic & 0.1621 $\pm$ 0.0210 & 0.1266 $\pm$ 0.0242 & 0.3290 $\pm$ 0.0525 & 0.1031 $\pm$ 0.0090 & \underline{0.0513 $\pm$ 0.0299} & 0.0517 $\pm$ 0.0295 & \textbf{0.0491 $\pm$ 0.0291} \\
    \bottomrule
    \end{tabular}
    }\vspace{-1.5em}
\end{table*}

We conducted a simulation study to assess whether QSurv can recover instantaneous hazard functions under nonlinear covariate effects. Details of the data-generating mechanisms and additional results are provided in Appendix \ref{appendix:simulation}. Table \ref{tab:simulation} reports $L_1$ error on the test set for the estimated instantaneous hazard functions. QSurv and SODEN consistently produced the smallest hazard-estimation errors, particularly for distributions with more complex hazard shapes. The main difference was computational: QSurv achieved similar accuracy with much shorter training time. Across simulation settings, SODEN required approximately 60 seconds on average, compared with approximately 6 seconds for QSurv. Figures \ref{fig:sim_exponential}--\ref{fig:sim_loglogistic} show the corresponding test-set estimates of the instantaneous hazard, cumulative hazard, and survival functions.

\subsection{Real-world Datasets}

\begin{table*}[!t]
\centering
\caption{Performance of QSurv vs deep survival baselines across the full and quantile-based horizons. Best results are \textbf{bold}, second best are \underline{underlined}. The right-most D-cal column is the mean $p$-value of the discrete-time D-calibration goodness-of-fit test (higher is better; $p>0.05$ indicates no calibration violation).}
\label{tab:realdata}
\resizebox{\linewidth}{!}{%
\begin{tabular}{ll ccc ccc ccc c}
\toprule
 & & \multicolumn{3}{c}{\textbf{Horizon: Full ($\hat{G}(\tau){\approx}0.001$)}} & \multicolumn{3}{c}{\textbf{Horizon: $T_{Q_1}$}} & \multicolumn{3}{c}{\textbf{Horizon: $T_{Q_2}$ (median)}} & \\
\cmidrule(lr){3-5} \cmidrule(lr){6-8} \cmidrule(lr){9-11}
Dataset & Model & $C_{td} \uparrow$ & IBS $\downarrow$ & IBLL $\uparrow$ & $C_{td} \uparrow$ & IBS $\downarrow$ & IBLL $\uparrow$ & $C_{td} \uparrow$ & IBS $\downarrow$ & IBLL $\uparrow$ & D-cal $\uparrow$ \\
\midrule \midrule
\multirow{8}{*}{COVID-19-NY}
 & CoxCC & 0.5693 $\pm$ 0.0961 & 0.1885 $\pm$ 0.0882 & -0.5211 $\pm$ 0.2118 & 0.5925 $\pm$ 0.1640 & 0.0048 $\pm$ 0.0052 & -0.0309 $\pm$ 0.0349 & 0.5504 $\pm$ 0.0633 & 0.0366 $\pm$ 0.0090 & -0.1656 $\pm$ 0.0339 & 0.7264 $\pm$ 0.4338 \\
 & CoxTime & 0.5587 $\pm$ 0.0847 & 0.2079 $\pm$ 0.1211 & -0.6197 $\pm$ 0.3588 & 0.5901 $\pm$ 0.1031 & \textbf{0.0044 $\pm$ 0.0046} & -0.0306 $\pm$ 0.0337 & 0.5636 $\pm$ 0.0891 & 0.0334 $\pm$ 0.0100 & -0.1592 $\pm$ 0.0327 & 0.6061 $\pm$ 0.5335 \\
 & DeepHit & 0.6677 $\pm$ 0.0793 & \underline{0.0708 $\pm$ 0.0209} & -0.2426 $\pm$ 0.0600 & 0.4951 $\pm$ 0.1224 & \underline{0.0045 $\pm$ 0.0046} & \underline{-0.0291 $\pm$ 0.0282} & 0.6516 $\pm$ 0.0859 & 0.0371 $\pm$ 0.0130 & -0.1660 $\pm$ 0.0491 & 0.7470 $\pm$ 0.3440 \\
 & NnetSurv & 0.6660 $\pm$ 0.0844 & 0.0797 $\pm$ 0.0159 & -0.2629 $\pm$ 0.0406 & 0.5753 $\pm$ 0.0990 & 0.0045 $\pm$ 0.0046 & \textbf{-0.0265 $\pm$ 0.0259} & 0.6793 $\pm$ 0.1175 & 0.0344 $\pm$ 0.0069 & -0.1463 $\pm$ 0.0171 & 0.8787 $\pm$ 0.2473 \\
 & MDN & 0.6889 $\pm$ 0.0626 & 0.1892 $\pm$ 0.0797 & -0.5626 $\pm$ 0.2426 & 0.5669 $\pm$ 0.1987 & 0.0185 $\pm$ 0.0100 & -0.0954 $\pm$ 0.0358 & 0.7116 $\pm$ 0.0536 & 0.0406 $\pm$ 0.0107 & -0.1660 $\pm$ 0.0323 & 0.7658 $\pm$ 0.2906 \\
 & DeSurv & \underline{0.7420 $\pm$ 0.0923} & \textbf{0.0617 $\pm$ 0.0269} & \textbf{-0.1981 $\pm$ 0.0802} & 0.7173 $\pm$ 0.2020 & 0.0052 $\pm$ 0.0049 & -0.0367 $\pm$ 0.0258 & \underline{0.7944 $\pm$ 0.0615} & \textbf{0.0322 $\pm$ 0.0089} & \underline{-0.1410 $\pm$ 0.0334} & 0.8502 $\pm$ 0.1983 \\
 & SODEN & 0.7143 $\pm$ 0.0530 & 0.1000 $\pm$ 0.0282 & -0.2967 $\pm$ 0.0709 & \textbf{0.7356 $\pm$ 0.1669} & 0.0048 $\pm$ 0.0049 & -0.0335 $\pm$ 0.0275 & 0.7689 $\pm$ 0.1099 & \underline{0.0324 $\pm$ 0.0098} & \textbf{-0.1360 $\pm$ 0.0357} & 0.9221 $\pm$ 0.1042 \\
 & QSurv & \textbf{0.7456 $\pm$ 0.1046} & 0.0716 $\pm$ 0.0155 & \underline{-0.2318 $\pm$ 0.0489} & \underline{0.7260 $\pm$ 0.1002} & 0.0062 $\pm$ 0.0050 & -0.0405 $\pm$ 0.0259 & \textbf{0.7951 $\pm$ 0.0908} & 0.0352 $\pm$ 0.0157 & -0.1441 $\pm$ 0.0488 & 0.7391 $\pm$ 0.2436 \\
\midrule
\multirow{8}{*}{C4KC-KiTS}
 & CoxCC & 0.4116 $\pm$ 0.1798 & 0.1099 $\pm$ 0.0050 & -0.3644 $\pm$ 0.0240 & 0.4834 $\pm$ 0.2542 & 0.0307 $\pm$ 0.0196 & -0.1523 $\pm$ 0.0805 & 0.4030 $\pm$ 0.1679 & \underline{0.0740 $\pm$ 0.0132} & \underline{-0.2840 $\pm$ 0.0468} & 1.0000 $\pm$ 0.0001 \\
 & CoxTime & \underline{0.5808 $\pm$ 0.1985} & \underline{0.1086 $\pm$ 0.0015} & \underline{-0.3571 $\pm$ 0.0139} & 0.5877 $\pm$ 0.2812 & \underline{0.0304 $\pm$ 0.0194} & -0.1479 $\pm$ 0.0788 & \underline{0.5806 $\pm$ 0.1851} & \textbf{0.0734 $\pm$ 0.0124} & \textbf{-0.2787 $\pm$ 0.0408} & 0.9999 $\pm$ 0.0001 \\
 & DeepHit & 0.5090 $\pm$ 0.1089 & 0.2179 $\pm$ 0.1236 & -0.7631 $\pm$ 0.3839 & 0.5290 $\pm$ 0.0644 & 0.0310 $\pm$ 0.0195 & -0.1570 $\pm$ 0.0838 & 0.4900 $\pm$ 0.1264 & 0.0820 $\pm$ 0.0192 & -0.3271 $\pm$ 0.0690 & 0.6826 $\pm$ 0.4021 \\
 & NnetSurv & 0.5422 $\pm$ 0.1375 & 0.1262 $\pm$ 0.0240 & -0.5230 $\pm$ 0.1907 & \underline{0.7345 $\pm$ 0.1797} & 0.0340 $\pm$ 0.0172 & -0.1964 $\pm$ 0.1066 & 0.5371 $\pm$ 0.1172 & 0.0867 $\pm$ 0.0277 & -0.4008 $\pm$ 0.1529 & 0.9574 $\pm$ 0.0529 \\
 & MDN & 0.5230 $\pm$ 0.2206 & 0.1106 $\pm$ 0.0139 & -0.4742 $\pm$ 0.2106 & 0.5290 $\pm$ 0.1920 & 0.0352 $\pm$ 0.0186 & -0.1908 $\pm$ 0.1246 & 0.5270 $\pm$ 0.2244 & 0.0773 $\pm$ 0.0155 & -0.3829 $\pm$ 0.2099 & 0.9693 $\pm$ 0.0473 \\
 & DeSurv & 0.5740 $\pm$ 0.1630 & 0.1156 $\pm$ 0.0152 & -0.6108 $\pm$ 0.3646 & 0.6274 $\pm$ 0.2445 & 0.0329 $\pm$ 0.0232 & -0.2136 $\pm$ 0.1838 & 0.5685 $\pm$ 0.1636 & 0.0826 $\pm$ 0.0228 & -0.4910 $\pm$ 0.2947 & 0.9889 $\pm$ 0.0225 \\
 & SODEN & 0.4987 $\pm$ 0.1159 & \textbf{0.1074 $\pm$ 0.0045} & \textbf{-0.3558 $\pm$ 0.0307} & 0.5679 $\pm$ 0.1812 & 0.0307 $\pm$ 0.0196 & \underline{-0.1401 $\pm$ 0.0765} & 0.5150 $\pm$ 0.0939 & 0.0754 $\pm$ 0.0131 & -0.2879 $\pm$ 0.0463 & 0.9995 $\pm$ 0.0004 \\
 & QSurv & \textbf{0.6381 $\pm$ 0.1446} & 0.1260 $\pm$ 0.0453 & -0.4002 $\pm$ 0.1113 & \textbf{0.7404 $\pm$ 0.1547} & \textbf{0.0288 $\pm$ 0.0178} & \textbf{-0.1220 $\pm$ 0.0679} & \textbf{0.6207 $\pm$ 0.1596} & 0.0776 $\pm$ 0.0235 & -0.2910 $\pm$ 0.0644 & 0.9845 $\pm$ 0.0209 \\
\midrule
\multirow{8}{*}{BraTS}
 & CoxCC & 0.4925 $\pm$ 0.0344 & 0.2021 $\pm$ 0.1203 & -0.6222 $\pm$ 0.3545 & 0.4976 $\pm$ 0.0676 & 0.1414 $\pm$ 0.0227 & -0.5501 $\pm$ 0.2447 & 0.4949 $\pm$ 0.0382 & 0.2168 $\pm$ 0.0411 & -0.7373 $\pm$ 0.3262 & 0.0702 $\pm$ 0.1141 \\
 & CoxTime & 0.4541 $\pm$ 0.0359 & 0.1171 $\pm$ 0.0130 & -0.3764 $\pm$ 0.0363 & 0.4414 $\pm$ 0.0549 & \underline{0.1355 $\pm$ 0.0128} & -0.4456 $\pm$ 0.0388 & 0.4460 $\pm$ 0.0406 & 0.2021 $\pm$ 0.0127 & -0.5939 $\pm$ 0.0319 & 0.3090 $\pm$ 0.3068 \\
 & DeepHit & 0.5196 $\pm$ 0.0383 & 0.1164 $\pm$ 0.0175 & -0.3765 $\pm$ 0.0453 & 0.5020 $\pm$ 0.0679 & 0.1357 $\pm$ 0.0125 & -0.4582 $\pm$ 0.0378 & 0.5130 $\pm$ 0.0457 & \underline{0.2010 $\pm$ 0.0160} & -0.6005 $\pm$ 0.0461 & 0.1414 $\pm$ 0.1793 \\
 & NnetSurv & 0.5355 $\pm$ 0.0395 & 0.1243 $\pm$ 0.0394 & -0.5086 $\pm$ 0.3472 & 0.5371 $\pm$ 0.0533 & 0.1806 $\pm$ 0.1093 & -0.5546 $\pm$ 0.2757 & 0.5471 $\pm$ 0.0465 & 0.2599 $\pm$ 0.1342 & -0.8321 $\pm$ 0.5395 & 0.3879 $\pm$ 0.5029 \\
 & MDN & 0.5333 $\pm$ 0.0537 & 0.1274 $\pm$ 0.0103 & -0.4131 $\pm$ 0.0252 & 0.5321 $\pm$ 0.0342 & 0.1754 $\pm$ 0.0082 & -0.5387 $\pm$ 0.0175 & 0.5429 $\pm$ 0.0479 & 0.2120 $\pm$ 0.0088 & -0.6149 $\pm$ 0.0186 & 0.0000 $\pm$ 0.0000 \\
 & DeSurv & 0.5459 $\pm$ 0.0297 & \textbf{0.1038 $\pm$ 0.0176} & \underline{-0.3405 $\pm$ 0.0707} & 0.5570 $\pm$ 0.0372 & \textbf{0.1355 $\pm$ 0.0141} & \textbf{-0.4378 $\pm$ 0.0363} & 0.5438 $\pm$ 0.0297 & 0.2044 $\pm$ 0.0195 & \underline{-0.5930 $\pm$ 0.0451} & 0.1535 $\pm$ 0.1623 \\
 & SODEN & \underline{0.5672 $\pm$ 0.0510} & 0.1089 $\pm$ 0.0121 & -0.3489 $\pm$ 0.0442 & \underline{0.5984 $\pm$ 0.0894} & 0.1511 $\pm$ 0.0308 & -0.4757 $\pm$ 0.0717 & \underline{0.5763 $\pm$ 0.0607} & 0.2217 $\pm$ 0.0373 & -0.6358 $\pm$ 0.0900 & 0.0951 $\pm$ 0.1047 \\
 & QSurv & \textbf{0.5811 $\pm$ 0.0387} & \underline{0.1038 $\pm$ 0.0133} & \textbf{-0.3327 $\pm$ 0.0371} & \textbf{0.6149 $\pm$ 0.0559} & 0.1380 $\pm$ 0.0254 & \underline{-0.4393 $\pm$ 0.0628} & \textbf{0.5968 $\pm$ 0.0537} & \textbf{0.1993 $\pm$ 0.0329} & \textbf{-0.5868 $\pm$ 0.0868} & 0.2161 $\pm$ 0.2395 \\
\midrule \midrule
\multirow{7}{*}{FLCHAIN}
 & CoxCC & 0.7967 $\pm$ 0.0101 & 0.0936 $\pm$ 0.0024 & -0.3103 $\pm$ 0.0075 & 0.8024 $\pm$ 0.0120 & 0.0680 $\pm$ 0.0023 & -0.2395 $\pm$ 0.0075 & 0.7988 $\pm$ 0.0099 & \underline{0.0861 $\pm$ 0.0023} & -0.2914 $\pm$ 0.0076 & 0.9224 $\pm$ 0.1189 \\
 & CoxTime & \underline{0.7968 $\pm$ 0.0101} & 0.0936 $\pm$ 0.0024 & \underline{-0.3100 $\pm$ 0.0074} & 0.8027 $\pm$ 0.0125 & \underline{0.0680 $\pm$ 0.0023} & \underline{-0.2389 $\pm$ 0.0076} & \underline{0.7989 $\pm$ 0.0102} & 0.0862 $\pm$ 0.0023 & \underline{-0.2910 $\pm$ 0.0074} & 0.9354 $\pm$ 0.1325 \\
 & DeepHit & 0.7883 $\pm$ 0.0107 & 0.1056 $\pm$ 0.0026 & -0.3400 $\pm$ 0.0069 & 0.7891 $\pm$ 0.0145 & 0.0755 $\pm$ 0.0029 & -0.2587 $\pm$ 0.0081 & 0.7896 $\pm$ 0.0114 & 0.0974 $\pm$ 0.0028 & -0.3190 $\pm$ 0.0076 & 0.0192 $\pm$ 0.0482 \\
 & NnetSurv & 0.7887 $\pm$ 0.0113 & \underline{0.0935 $\pm$ 0.0028} & -0.3103 $\pm$ 0.0086 & 0.7898 $\pm$ 0.0152 & 0.0681 $\pm$ 0.0026 & -0.2402 $\pm$ 0.0082 & 0.7903 $\pm$ 0.0121 & 0.0864 $\pm$ 0.0026 & -0.2922 $\pm$ 0.0084 & 0.7866 $\pm$ 0.2870 \\
 & MDN & 0.7954 $\pm$ 0.0096 & 0.0946 $\pm$ 0.0020 & -0.3142 $\pm$ 0.0068 & 0.8012 $\pm$ 0.0117 & 0.0689 $\pm$ 0.0021 & -0.2425 $\pm$ 0.0069 & 0.7973 $\pm$ 0.0097 & 0.0872 $\pm$ 0.0020 & -0.2949 $\pm$ 0.0071 & 0.4600 $\pm$ 0.2720 \\
 & DeSurv & 0.7968 $\pm$ 0.0097 & 0.0936 $\pm$ 0.0021 & -0.3106 $\pm$ 0.0067 & \underline{0.8030 $\pm$ 0.0116} & 0.0680 $\pm$ 0.0023 & -0.2398 $\pm$ 0.0073 & 0.7988 $\pm$ 0.0097 & 0.0862 $\pm$ 0.0022 & -0.2919 $\pm$ 0.0070 & 0.8706 $\pm$ 0.1812 \\
 & QSurv & \textbf{0.7968 $\pm$ 0.0096} & \textbf{0.0935 $\pm$ 0.0023} & \textbf{-0.3097 $\pm$ 0.0070} & \textbf{0.8031 $\pm$ 0.0118} & \textbf{0.0678 $\pm$ 0.0023} & \textbf{-0.2387 $\pm$ 0.0077} & \textbf{0.7989 $\pm$ 0.0094} & \textbf{0.0860 $\pm$ 0.0022} & \textbf{-0.2907 $\pm$ 0.0073} & 0.8819 $\pm$ 0.1852 \\
\midrule
\multirow{7}{*}{FRAMINGHAM}
 & CoxCC & 0.7407 $\pm$ 0.0142 & \underline{0.0882 $\pm$ 0.0029} & \underline{-0.2929 $\pm$ 0.0088} & 0.7556 $\pm$ 0.0131 & \textbf{0.0607 $\pm$ 0.0032} & -0.2162 $\pm$ 0.0099 & 0.7407 $\pm$ 0.0142 & \underline{0.0882 $\pm$ 0.0029} & \underline{-0.2929 $\pm$ 0.0088} & 0.9139 $\pm$ 0.1385 \\
 & CoxTime & \underline{0.7411 $\pm$ 0.0156} & \textbf{0.0881 $\pm$ 0.0031} & \textbf{-0.2925 $\pm$ 0.0095} & 0.7561 $\pm$ 0.0146 & 0.0608 $\pm$ 0.0034 & \textbf{-0.2160 $\pm$ 0.0109} & \underline{0.7411 $\pm$ 0.0156} & \textbf{0.0881 $\pm$ 0.0031} & \textbf{-0.2925 $\pm$ 0.0095} & 0.9543 $\pm$ 0.0693 \\
 & DeepHit & 0.7262 $\pm$ 0.0136 & 0.1780 $\pm$ 0.0162 & -0.5220 $\pm$ 0.0412 & 0.7403 $\pm$ 0.0136 & 0.0819 $\pm$ 0.0089 & -0.2948 $\pm$ 0.0254 & 0.7262 $\pm$ 0.0136 & 0.1780 $\pm$ 0.0162 & -0.5220 $\pm$ 0.0412 & 0.0000 $\pm$ 0.0000 \\
 & NnetSurv & 0.7316 $\pm$ 0.0165 & 0.0902 $\pm$ 0.0033 & -0.2997 $\pm$ 0.0110 & 0.7439 $\pm$ 0.0159 & 0.0615 $\pm$ 0.0031 & -0.2200 $\pm$ 0.0103 & 0.7316 $\pm$ 0.0165 & 0.0902 $\pm$ 0.0033 & -0.2997 $\pm$ 0.0110 & 0.6171 $\pm$ 0.3546 \\
 & MDN & 0.7378 $\pm$ 0.0128 & 0.0890 $\pm$ 0.0025 & -0.2960 $\pm$ 0.0084 & 0.7541 $\pm$ 0.0125 & 0.0611 $\pm$ 0.0031 & -0.2183 $\pm$ 0.0108 & 0.7378 $\pm$ 0.0128 & 0.0890 $\pm$ 0.0025 & -0.2960 $\pm$ 0.0084 & 0.8712 $\pm$ 0.2163 \\
 & DeSurv & 0.7400 $\pm$ 0.0138 & 0.0887 $\pm$ 0.0030 & -0.2954 $\pm$ 0.0104 & \underline{0.7565 $\pm$ 0.0137} & 0.0610 $\pm$ 0.0035 & -0.2178 $\pm$ 0.0116 & 0.7400 $\pm$ 0.0138 & 0.0887 $\pm$ 0.0030 & -0.2954 $\pm$ 0.0104 & 0.8984 $\pm$ 0.1424 \\
 & QSurv & \textbf{0.7414 $\pm$ 0.0146} & 0.0884 $\pm$ 0.0029 & -0.2933 $\pm$ 0.0087 & \textbf{0.7578 $\pm$ 0.0145} & \underline{0.0607 $\pm$ 0.0033} & \underline{-0.2161 $\pm$ 0.0105} & \textbf{0.7414 $\pm$ 0.0146} & 0.0884 $\pm$ 0.0029 & -0.2933 $\pm$ 0.0087 & 0.9004 $\pm$ 0.1399 \\
\midrule
\multirow{7}{*}{GBSG}
 & CoxCC & 0.6554 $\pm$ 0.0294 & \textbf{0.1759 $\pm$ 0.0117} & \textbf{-0.5220 $\pm$ 0.0328} & 0.6697 $\pm$ 0.0430 & \underline{0.0866 $\pm$ 0.0088} & \underline{-0.2991 $\pm$ 0.0284} & 0.6678 $\pm$ 0.0291 & 0.1408 $\pm$ 0.0139 & -0.4386 $\pm$ 0.0366 & 0.7995 $\pm$ 0.2822 \\
 & CoxTime & 0.6495 $\pm$ 0.0294 & 0.1822 $\pm$ 0.0176 & -0.5385 $\pm$ 0.0464 & 0.6683 $\pm$ 0.0425 & 0.0877 $\pm$ 0.0084 & -0.3027 $\pm$ 0.0258 & 0.6656 $\pm$ 0.0293 & 0.1433 $\pm$ 0.0142 & -0.4473 $\pm$ 0.0371 & 0.7181 $\pm$ 0.3025 \\
 & DeepHit & 0.6336 $\pm$ 0.0280 & 0.2071 $\pm$ 0.0301 & -0.5975 $\pm$ 0.0793 & 0.6497 $\pm$ 0.0401 & 0.0903 $\pm$ 0.0087 & -0.3162 $\pm$ 0.0270 & 0.6531 $\pm$ 0.0314 & 0.1556 $\pm$ 0.0185 & -0.4759 $\pm$ 0.0436 & 0.4138 $\pm$ 0.4239 \\
 & NnetSurv & 0.6245 $\pm$ 0.0314 & 0.2001 $\pm$ 0.0782 & -0.7529 $\pm$ 0.9511 & 0.6401 $\pm$ 0.0423 & 0.1172 $\pm$ 0.1302 & -0.4358 $\pm$ 0.5667 & 0.6388 $\pm$ 0.0301 & 0.1724 $\pm$ 0.1255 & -0.6716 $\pm$ 0.9790 & 0.6485 $\pm$ 0.3673 \\
 & MDN & 0.6529 $\pm$ 0.0286 & 0.1818 $\pm$ 0.0108 & -0.5406 $\pm$ 0.0321 & 0.6666 $\pm$ 0.0484 & 0.0916 $\pm$ 0.0093 & -0.3187 $\pm$ 0.0309 & 0.6649 $\pm$ 0.0286 & 0.1442 $\pm$ 0.0124 & -0.4537 $\pm$ 0.0334 & 0.4328 $\pm$ 0.3027 \\
 & DeSurv & \textbf{0.6592 $\pm$ 0.0245} & \underline{0.1769 $\pm$ 0.0120} & -0.5248 $\pm$ 0.0344 & \textbf{0.6784 $\pm$ 0.0364} & 0.0873 $\pm$ 0.0074 & -0.3029 $\pm$ 0.0203 & \textbf{0.6727 $\pm$ 0.0252} & \underline{0.1403 $\pm$ 0.0113} & \underline{-0.4378 $\pm$ 0.0280} & 0.3674 $\pm$ 0.2050 \\
 & QSurv & \underline{0.6558 $\pm$ 0.0297} & 0.1780 $\pm$ 0.0129 & \underline{-0.5247 $\pm$ 0.0353} & \underline{0.6765 $\pm$ 0.0453} & \textbf{0.0865 $\pm$ 0.0076} & \textbf{-0.2983 $\pm$ 0.0219} & \underline{0.6697 $\pm$ 0.0299} & \textbf{0.1402 $\pm$ 0.0134} & \textbf{-0.4369 $\pm$ 0.0343} & 0.6001 $\pm$ 0.3362 \\
\midrule
\multirow{7}{*}{METABRIC}
 & CoxCC & 0.6346 $\pm$ 0.0200 & \underline{0.1549 $\pm$ 0.0099} & -0.4679 $\pm$ 0.0294 & 0.6229 $\pm$ 0.0322 & 0.1055 $\pm$ 0.0050 & -0.3561 $\pm$ 0.0157 & 0.6368 $\pm$ 0.0260 & 0.1562 $\pm$ 0.0055 & -0.4798 $\pm$ 0.0158 & 0.6223 $\pm$ 0.2984 \\
 & CoxTime & \underline{0.6565 $\pm$ 0.0229} & 0.1578 $\pm$ 0.0102 & -0.4739 $\pm$ 0.0304 & \textbf{0.6739 $\pm$ 0.0320} & \underline{0.1032 $\pm$ 0.0063} & \underline{-0.3460 $\pm$ 0.0230} & \textbf{0.6637 $\pm$ 0.0246} & \underline{0.1533 $\pm$ 0.0070} & -0.4707 $\pm$ 0.0244 & 0.5473 $\pm$ 0.3588 \\
 & DeepHit & \textbf{0.6626 $\pm$ 0.0182} & 0.1609 $\pm$ 0.0102 & -0.4844 $\pm$ 0.0281 & 0.6669 $\pm$ 0.0300 & 0.1040 $\pm$ 0.0046 & -0.3494 $\pm$ 0.0131 & \underline{0.6608 $\pm$ 0.0232} & 0.1571 $\pm$ 0.0039 & -0.4803 $\pm$ 0.0100 & 0.5653 $\pm$ 0.2963 \\
 & NnetSurv & 0.6459 $\pm$ 0.0232 & 0.1626 $\pm$ 0.0101 & -0.4870 $\pm$ 0.0263 & 0.6531 $\pm$ 0.0312 & 0.1044 $\pm$ 0.0055 & -0.3505 $\pm$ 0.0170 & 0.6505 $\pm$ 0.0259 & 0.1565 $\pm$ 0.0058 & -0.4787 $\pm$ 0.0164 & 0.3262 $\pm$ 0.3765 \\
 & MDN & 0.6365 $\pm$ 0.0178 & 0.1708 $\pm$ 0.0089 & -0.5143 $\pm$ 0.0231 & 0.6309 $\pm$ 0.0334 & 0.1147 $\pm$ 0.0052 & -0.3852 $\pm$ 0.0142 & 0.6413 $\pm$ 0.0223 & 0.1600 $\pm$ 0.0048 & -0.4911 $\pm$ 0.0129 & 0.0000 $\pm$ 0.0000 \\
 & DeSurv & 0.6551 $\pm$ 0.0207 & 0.1554 $\pm$ 0.0110 & \underline{-0.4656 $\pm$ 0.0302} & 0.6660 $\pm$ 0.0343 & 0.1033 $\pm$ 0.0039 & -0.3460 $\pm$ 0.0123 & 0.6599 $\pm$ 0.0261 & 0.1533 $\pm$ 0.0052 & \underline{-0.4699 $\pm$ 0.0140} & 0.3543 $\pm$ 0.2764 \\
 & QSurv & 0.6542 $\pm$ 0.0198 & \textbf{0.1548 $\pm$ 0.0100} & \textbf{-0.4638 $\pm$ 0.0290} & \underline{0.6683 $\pm$ 0.0313} & \textbf{0.1031 $\pm$ 0.0049} & \textbf{-0.3451 $\pm$ 0.0153} & 0.6602 $\pm$ 0.0220 & \textbf{0.1531 $\pm$ 0.0048} & \textbf{-0.4687 $\pm$ 0.0137} & 0.3580 $\pm$ 0.2688 \\
\midrule
\multirow{7}{*}{NWTCO}
 & CoxCC & 0.6422 $\pm$ 0.0345 & 0.1126 $\pm$ 0.0039 & -0.3688 $\pm$ 0.0141 & 0.6343 $\pm$ 0.0347 & 0.0800 $\pm$ 0.0031 & -0.2871 $\pm$ 0.0107 & 0.6394 $\pm$ 0.0335 & 0.1023 $\pm$ 0.0029 & -0.3503 $\pm$ 0.0102 & 0.9777 $\pm$ 0.0471 \\
 & CoxTime & 0.6377 $\pm$ 0.0347 & 0.1139 $\pm$ 0.0047 & -0.3740 $\pm$ 0.0198 & 0.6310 $\pm$ 0.0346 & 0.0810 $\pm$ 0.0038 & -0.2904 $\pm$ 0.0136 & 0.6350 $\pm$ 0.0342 & 0.1036 $\pm$ 0.0038 & -0.3549 $\pm$ 0.0156 & 0.9341 $\pm$ 0.2133 \\
 & DeepHit & 0.6360 $\pm$ 0.0266 & 0.1147 $\pm$ 0.0023 & -0.3744 $\pm$ 0.0108 & 0.6282 $\pm$ 0.0263 & 0.0807 $\pm$ 0.0025 & -0.2891 $\pm$ 0.0078 & 0.6342 $\pm$ 0.0259 & 0.1033 $\pm$ 0.0018 & -0.3527 $\pm$ 0.0077 & 0.8889 $\pm$ 0.2197 \\
 & NnetSurv & 0.6301 $\pm$ 0.0199 & 0.1155 $\pm$ 0.0027 & -0.3771 $\pm$ 0.0113 & 0.6236 $\pm$ 0.0209 & 0.0817 $\pm$ 0.0030 & -0.2930 $\pm$ 0.0080 & 0.6284 $\pm$ 0.0197 & 0.1046 $\pm$ 0.0023 & -0.3570 $\pm$ 0.0070 & 0.7297 $\pm$ 0.3042 \\
 & MDN & \textbf{0.6532 $\pm$ 0.0273} & \textbf{0.1102 $\pm$ 0.0032} & \textbf{-0.3622 $\pm$ 0.0117} & \textbf{0.6448 $\pm$ 0.0269} & 0.0808 $\pm$ 0.0028 & -0.2938 $\pm$ 0.0089 & \textbf{0.6498 $\pm$ 0.0263} & 0.1032 $\pm$ 0.0019 & -0.3565 $\pm$ 0.0080 & 0.9426 $\pm$ 0.0962 \\
 & DeSurv & \underline{0.6492 $\pm$ 0.0236} & 0.1120 $\pm$ 0.0030 & \underline{-0.3664 $\pm$ 0.0115} & \underline{0.6398 $\pm$ 0.0227} & \underline{0.0798 $\pm$ 0.0030} & \underline{-0.2862 $\pm$ 0.0088} & \underline{0.6459 $\pm$ 0.0231} & \underline{0.1020 $\pm$ 0.0028} & \underline{-0.3489 $\pm$ 0.0095} & 0.9934 $\pm$ 0.0155 \\
 & QSurv & 0.6480 $\pm$ 0.0209 & \underline{0.1118 $\pm$ 0.0027} & -0.3667 $\pm$ 0.0112 & 0.6398 $\pm$ 0.0204 & \textbf{0.0793 $\pm$ 0.0027} & \textbf{-0.2848 $\pm$ 0.0075} & 0.6449 $\pm$ 0.0202 & \textbf{0.1015 $\pm$ 0.0021} & \textbf{-0.3477 $\pm$ 0.0075} & 0.9401 $\pm$ 0.1839 \\
\midrule
\multirow{7}{*}{SUPPORT2}
 & CoxCC & 0.6770 $\pm$ 0.0061 & 0.1798 $\pm$ 0.0046 & -0.5449 $\pm$ 0.0201 & 0.7122 $\pm$ 0.0079 & 0.1202 $\pm$ 0.0022 & -0.3896 $\pm$ 0.0065 & 0.6897 $\pm$ 0.0057 & 0.1927 $\pm$ 0.0031 & -0.5714 $\pm$ 0.0106 & 0.0781 $\pm$ 0.0903 \\
 & CoxTime & \underline{0.6941 $\pm$ 0.0052} & \underline{0.1778 $\pm$ 0.0045} & -0.5336 $\pm$ 0.0140 & \underline{0.7508 $\pm$ 0.0092} & \underline{0.1147 $\pm$ 0.0023} & \underline{-0.3680 $\pm$ 0.0071} & 0.7061 $\pm$ 0.0052 & \underline{0.1884 $\pm$ 0.0031} & -0.5573 $\pm$ 0.0081 & 0.1508 $\pm$ 0.2010 \\
 & DeepHit & 0.6847 $\pm$ 0.0063 & 0.1914 $\pm$ 0.0041 & -0.5653 $\pm$ 0.0102 & 0.7082 $\pm$ 0.0083 & 0.1238 $\pm$ 0.0021 & -0.3954 $\pm$ 0.0055 & 0.6875 $\pm$ 0.0070 & 0.2090 $\pm$ 0.0020 & -0.6042 $\pm$ 0.0045 & 0.0657 $\pm$ 0.0990 \\
 & NnetSurv & 0.6722 $\pm$ 0.0058 & \textbf{0.1773 $\pm$ 0.0043} & \textbf{-0.5293 $\pm$ 0.0117} & 0.7115 $\pm$ 0.0060 & \textbf{0.1137 $\pm$ 0.0017} & \textbf{-0.3650 $\pm$ 0.0049} & 0.6827 $\pm$ 0.0059 & \textbf{0.1883 $\pm$ 0.0030} & \underline{-0.5565 $\pm$ 0.0081} & 0.3449 $\pm$ 0.2590 \\
 & MDN & 0.6746 $\pm$ 0.0057 & 0.1886 $\pm$ 0.0041 & -0.5564 $\pm$ 0.0106 & 0.7017 $\pm$ 0.0123 & 0.1370 $\pm$ 0.0023 & -0.4407 $\pm$ 0.0060 & 0.6853 $\pm$ 0.0075 & 0.2138 $\pm$ 0.0022 & -0.6146 $\pm$ 0.0055 & 0.0000 $\pm$ 0.0000 \\
 & DeSurv & \textbf{0.6950 $\pm$ 0.0048} & 0.1792 $\pm$ 0.0053 & -0.5357 $\pm$ 0.0157 & 0.7501 $\pm$ 0.0094 & 0.1163 $\pm$ 0.0032 & -0.3723 $\pm$ 0.0092 & \underline{0.7067 $\pm$ 0.0055} & 0.1897 $\pm$ 0.0035 & -0.5597 $\pm$ 0.0083 & 0.0160 $\pm$ 0.0420 \\
 & QSurv & 0.6936 $\pm$ 0.0063 & 0.1778 $\pm$ 0.0042 & \underline{-0.5308 $\pm$ 0.0110} & \textbf{0.7512 $\pm$ 0.0077} & 0.1154 $\pm$ 0.0028 & -0.3697 $\pm$ 0.0080 & \textbf{0.7067 $\pm$ 0.0059} & 0.1885 $\pm$ 0.0032 & \textbf{-0.5562 $\pm$ 0.0075} & 0.0583 $\pm$ 0.1195 \\
\bottomrule \bottomrule
\end{tabular}
}\vspace{-2em}
\end{table*}

We evaluated QSurv on nine benchmark datasets, comprising six tabular datasets (FLCHAIN, FRAMINGHAM, GBSG, METABRIC, NWTCO, SUPPORT2) and three radiology imaging datasets (COVID-19-NY, C4KC-KiTS, and BraTS). These datasets cover diverse medical domains and vary substantially in sample size, feature dimensionality, and censoring rates; their main characteristics are summarized in Appendix~\ref{appendix:experiments}. Model performance was evaluated using the time-dependent concordance index ($C_{td}$), Integrated Brier Score (IBS), Integrated Binomial Log-Likelihood (IBLL), and D-calibration. Higher $C_{td}$ and IBLL and lower IBS indicate better performance. We define these metrics in Appendix~\ref{appendix:metrics}. To assess performance across follow-up durations, we evaluated each method over three horizons: the full horizon defined by $\hat{G}(\tau)=0.001$, and two restricted quantile-based horizons, $T_{Q_1}$ and $T_{Q_2}$, with $T_{Q_2}$ corresponding to the median horizon. For robust comparison, we performed random-search hyperparameter optimization with 30 configurations per model for tabular datasets and 20 configurations per model for imaging datasets. The full procedure was repeated 20 times for tabular datasets and 5 times for imaging datasets. Due to the high computational burden of adaptive ODE integration under this evaluation protocol, SODEN was evaluated on the imaging datasets but omitted from the tabular experiments.

Table~\ref{tab:realdata} shows that QSurv is consistently competitive across datasets, horizons, and evaluation metrics. Careful hyperparameter optimization makes several baselines strong, including CoxCC, CoxTime, NnetSurv, DeepHit, MDN, and DeSurv. Nevertheless, QSurv frequently achieves the best or second-best performance, particularly for discrimination and survival probability estimation at restricted horizons. These results suggest that directly parameterizing the instantaneous hazard, while evaluating the cumulative hazard through quadrature, provides a robust balance between flexible time-to-event modeling and stable likelihood-based training.

The advantage of QSurv is especially clear in the high-dimensional imaging settings, where flexible time-varying hazard modeling is most relevant. On COVID-19-NY, QSurv achieves the best full-horizon $C_{td}$ and the best median-horizon $C_{td}$, while remaining competitive with DeSurv and SODEN on likelihood- and calibration-oriented metrics. On C4KC-KiTS, QSurv obtains the best $C_{td}$ across all horizons and the strongest short-horizon IBS and IBLL. On BraTS, QSurv also achieves the best $C_{td}$ across all horizons and strong survival probability estimation, including the best median-horizon IBS and IBLL. These findings support the main motivation of QSurv: preserving continuous event times and directly learning instantaneous hazard can be beneficial when risk evolves nonlinearly over follow-up and covariates are high-dimensional.

On tabular datasets, performance differences are generally smaller, and several classical or discrete-time baselines remain highly competitive. QSurv nevertheless remains stable across datasets and horizons, often matching or improving on the best-performing alternatives. This suggests that the proposed quadrature-based hazard formulation does not sacrifice robustness in lower-dimensional settings, while providing additional flexibility for settings where proportional-hazards, parametric, or fixed-grid assumptions may be restrictive.

To further examine model behavior beyond aggregate prediction metrics, we compared predicted instantaneous hazard functions across representative risk clusters on COVID-19-NY (Figure~\ref{fig:covid}). Hazard clusters were formed by applying three-cluster $K$-means to subject-specific predicted hazard trajectories evaluated over a common time grid. This analysis highlights qualitative differences in how each method characterizes time-resolved risk. NnetSurv produces highly irregular, spike-like hazards. MDN yields smooth but relatively constrained, mostly monotone trajectories. SODEN produces nearly flat hazards that separate clusters mainly by risk level. DeSurv captures temporal variation, but its hazards remain irregular, with sharp oscillations and isolated spikes over follow-up. In contrast, QSurv produces smoother and more interpretable cluster-specific patterns: a high-risk cluster with elevated early hazard followed by gradual decline, an intermediate-risk cluster with persistent moderate risk, and a low-risk cluster with near-flat hazard. Unlike survival curves, which summarize accumulated risk up to each time point, these hazard trajectories directly identify when risk is concentrated over follow-up. The early high-risk pattern estimated by QSurv aligns with COVID-19 studies reporting concentrated early excess mortality after infection and elevated readmission-or-death risk shortly after hospital discharge, with attenuation over time \citep{iwashyna2023late,donnelly2021readmission}.

This comparison suggests that the qualitative difference between QSurv and SODEN may arise partly from how the cumulative hazard is evaluated during training, rather than from the hazard parameterization alone. Compared with SODEN, the fixed Gauss-Legendre quadrature rule evaluates the hazard at predetermined points over each subject's observed interval, rather than relying on adaptive solver behavior. Compared with DeSurv, QSurv treats the hazard itself as the primitive modeled quantity rather than recovering it indirectly from a learned distribution function. The resulting hazard estimates are flexible enough to capture non-monotone and cluster-specific temporal risk patterns, while remaining substantially smoother and more stable than the discrete-time or indirectly derived alternatives.

\begin{figure}[t]
    \centering
    \includegraphics[width=1.0\linewidth]{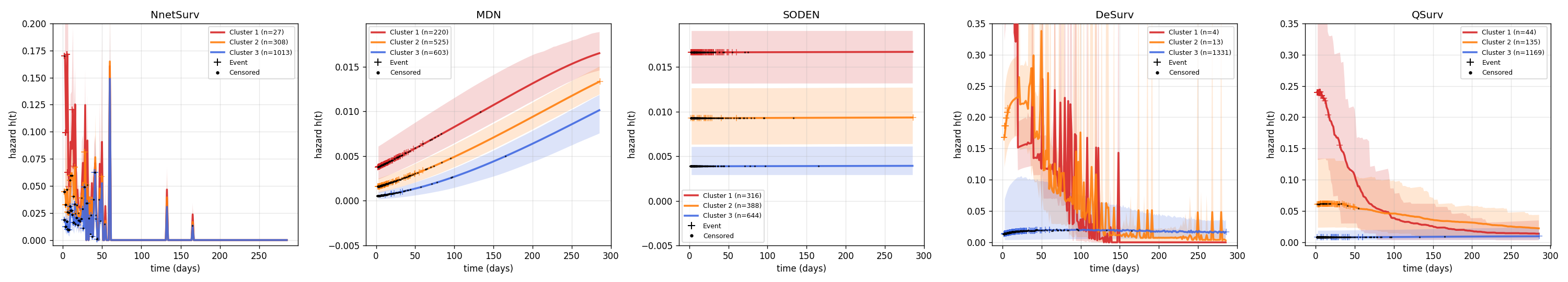}
    \vspace{-1.5em}
        \caption{Predicted instantaneous hazard functions on COVID-19-NY across representative risk clusters. Lines show cluster-level predicted hazards, with event times marked by plus signs and censored observations by dots. NnetSurv produces irregular spike-like hazards, MDN produces smooth but constrained trajectories, SODEN produces near-flat covariate-stratified hazards, and DeSurv captures temporal variation but with pronounced oscillations and spikes. QSurv yields smoother and more interpretable cluster-specific temporal patterns, including elevated early risk in the high-risk cluster followed by gradual decline over time.}
    \label{fig:covid} \vspace{-1em}
\end{figure}

\section{Conclusion}
We introduced QSurv, a quadrature-based nonparametric continuous-time survival model that directly parameterizes the instantaneous hazard as a flexible function of time and covariates. By evaluating the cumulative hazard with Gauss-Legendre quadrature, QSurv enables likelihood-based training without proportional-hazards assumptions, parametric survival distributions, time discretization, or adaptive ODE solvers. The resulting objective is simple, differentiable, and compatible with minibatch SGD, while Time-LoRA extends the framework to high-dimensional backbones by efficiently conditioning hazard predictions on time. Across simulations, tabular benchmarks, and imaging datasets, QSurv achieves competitive and robust performance. More importantly, its learned hazards provide time-resolved risk characterization, revealing clinically interpretable temporal patterns that may be obscured by aggregate survival summaries. A remaining limitation is that QSurv, like other deep survival models, depends on the hazard being learnable from available covariates and may require further validation under severe censoring, distribution shift, or highly irregular hazard dynamics. These results support QSurv as a practical framework for scalable and flexible continuous-time hazard modeling.

\begin{ack}
The research of Chaeyeon Lee and Sehwan Kim was supported by the Global-Learning \& Academic Research Institution for Master’s and PhD Students, and Postdocs (G-LAMP) Program of the National Research Foundation of Korea (NRF), funded by the Ministry of Education (No. RS-2025-25442252).
\end{ack}


\bibliographystyle{abbrvnat}
\bibliography{ref}

\newpage
\appendix

\section{Gauss-Legendre Quadrature}

Gauss-Legendre quadrature approximates a definite integral by a weighted sum of function evaluations at carefully chosen nonuniform nodes. Unlike grid-based rules such as Riemann sums or the trapezoidal rule, which evaluate the integrand at equally spaced points, Gauss-Legendre quadrature chooses nodes and weights to maximize algebraic precision. For an integral over the standard interval $[-1,1]$, the $K$-point rule is
\begin{equation}
    \int_{-1}^{1} q(u) du \approx \sum_{k=1}^{K} w_k q(\xi_k),
\end{equation}
where $\xi_k$ are the roots of the $K$-th Legendre polynomial $P_K$ and
\begin{equation}
    w_k = \frac{2}{(1-\xi_k^2)[P_K'(\xi_k)]^2}.
\end{equation}
The nodes and weights are fixed by the quadrature rule, not learned from data. With $K$ function evaluations, the rule is exact for all polynomials of degree at most $2K-1$.

For an integral over a general interval $[a,b]$, the affine transformation
\begin{equation}
    s = \frac{b-a}{2}u + \frac{b+a}{2}
\end{equation}
maps $u \in [-1,1]$ to $s \in [a,b]$. Therefore,
\begin{equation}
    \int_a^b q(s) ds \approx \frac{b-a}{2} \sum_{k=1}^{K} w_k q\left(\frac{b-a}{2}\xi_k + \frac{b+a}{2}\right).
\end{equation}
When $q$ is $2K$ times continuously differentiable, the classical Gauss-Legendre error formula gives
\begin{equation}
    E_K(q) = \frac{(b-a)^{2K+1}(K!)^4}{(2K+1)[(2K)!]^3} q^{(2K)}(\zeta)
\end{equation}
for some $\zeta \in (a,b)$. Thus, the approximation error is controlled by the high-order smoothness of the integrand. For analytic functions, Gauss-Legendre quadrature can converge much faster than low-order grid-based rules, often requiring substantially fewer evaluations to reach a comparable accuracy.

In QSurv, the integrand is the neural network parameterized hazard $q(s)=\lambda_\theta(s | x)$ and the interval is subject-specific, $[0,o_i]$. Applying the above transformation with $a=0$ and $b=o_i$ gives
\begin{equation}
    \Lambda_\theta(o_i | x_i) = \int_0^{o_i} \lambda_\theta(s | x_i) ds \approx \frac{o_i}{2} \sum_{k=1}^{K} w_k \lambda_\theta(o_i \tau_k | x_i),
\end{equation}
where $\tau_k=(\xi_k+1)/2$. This is the approximation used in the QSurv likelihood: the hazard remains a continuous-time function, while the cumulative hazard is evaluated through a fixed number of differentiable neural network parameterized hazard evaluations.

\section{Proofs}\label{appendix:proofs}

\begin{proof}[Proof of Theorem \ref{thm:error_bound}]
    The cumulative hazard is defined as the integral $\Lambda_\theta(t|x) = \int_0^t \lambda_\theta(s|x)\,ds$. To apply the standard error bound for Gauss-Legendre quadrature, which is defined on the reference interval $[-1, 1]$, we perform an affine transformation of variables. Let $s(u) = \tfrac{t}{2}(u + 1)$. This maps $u \in [-1, 1]$ to $s \in [0, t]$. The differential is $ds = \tfrac{t}{2}\, du$. The standard error term for $K$-point Gauss-Legendre quadrature on $[-1, 1]$ for a function $g$ is given by $E_K(g) = \tfrac{2^{2K+1}(K!)^4}{(2K+1)[(2K)!]^3}\, g^{(2K)}(\xi)$ for some $\xi \in (-1, 1)$. Let $g(u) = \lambda_\theta(\tfrac{t}{2}(u + 1)|x)$. By the chain rule, the $2K$-th derivative scales as $g^{(2K)}(u) = (\tfrac{t}{2})^{2K} \lambda_\theta^{(2K)}(s(u)|x)$. Evaluating at the point $\xi \in (-1, 1)$ given by the quadrature error theorem and letting $\tau = s(\xi) = (t/2)(\xi + 1) \in (0, t)$, we obtain $g^{(2K)}(\xi) = (t/2)^{2K}\, \lambda_\theta^{(2K)}(\tau|x)$. The total error is the standard error scaled by the integration measure $\tfrac{t}{2}$:
    \begin{equation}
        \text{Error} = \frac{t}{2} \cdot \frac{2^{2K+1}(K!)^4}{(2K + 1)[(2K)!]^3} \cdot \left[\left(\frac{t}{2}\right)^{2K} \lambda_\theta^{(2K)}(\tau|x)\right].
    \end{equation}
    Grouping terms, the factors of $2$ cancel, leaving the coefficient $\tfrac{t^{2K+1} (K!)^4}{(2K+1)[(2K)!]^3}$. Bounding $|\lambda_\theta^{(2K)}(\tau|x)|$ by $\max_{\tau \in [0, t]} |\lambda_\theta^{(2K)}(\tau|x)|$ yields the inequality.
\end{proof}

\begin{proof}[Proof of Corollary \ref{cor:error_prop}]
The negative log-likelihood decomposes into a log-hazard term and a cumulative hazard term. Since the hazard function $\lambda_\theta(t|x)$ is computed exactly via the neural network forward pass at time $t$, the term $-\delta \log \lambda_\theta(t|x)$ is identical in both the true and approximate losses. Consequently, the difference depends solely on the integral approximation:$$| \mathcal{L} - \hat{\mathcal{L}}_K | = | (-\delta \log \lambda + \Lambda) - (-\delta \log \lambda + \hat{\Lambda}_K) | = | \Lambda - \hat{\Lambda}_K | = \varepsilon_K(t|x).$$
\end{proof}

\section{Number of Quadrature Nodes}\label{appendix:num_nodes}

\begin{figure}[!t]
    \centering
    \includegraphics[width=0.9\linewidth]{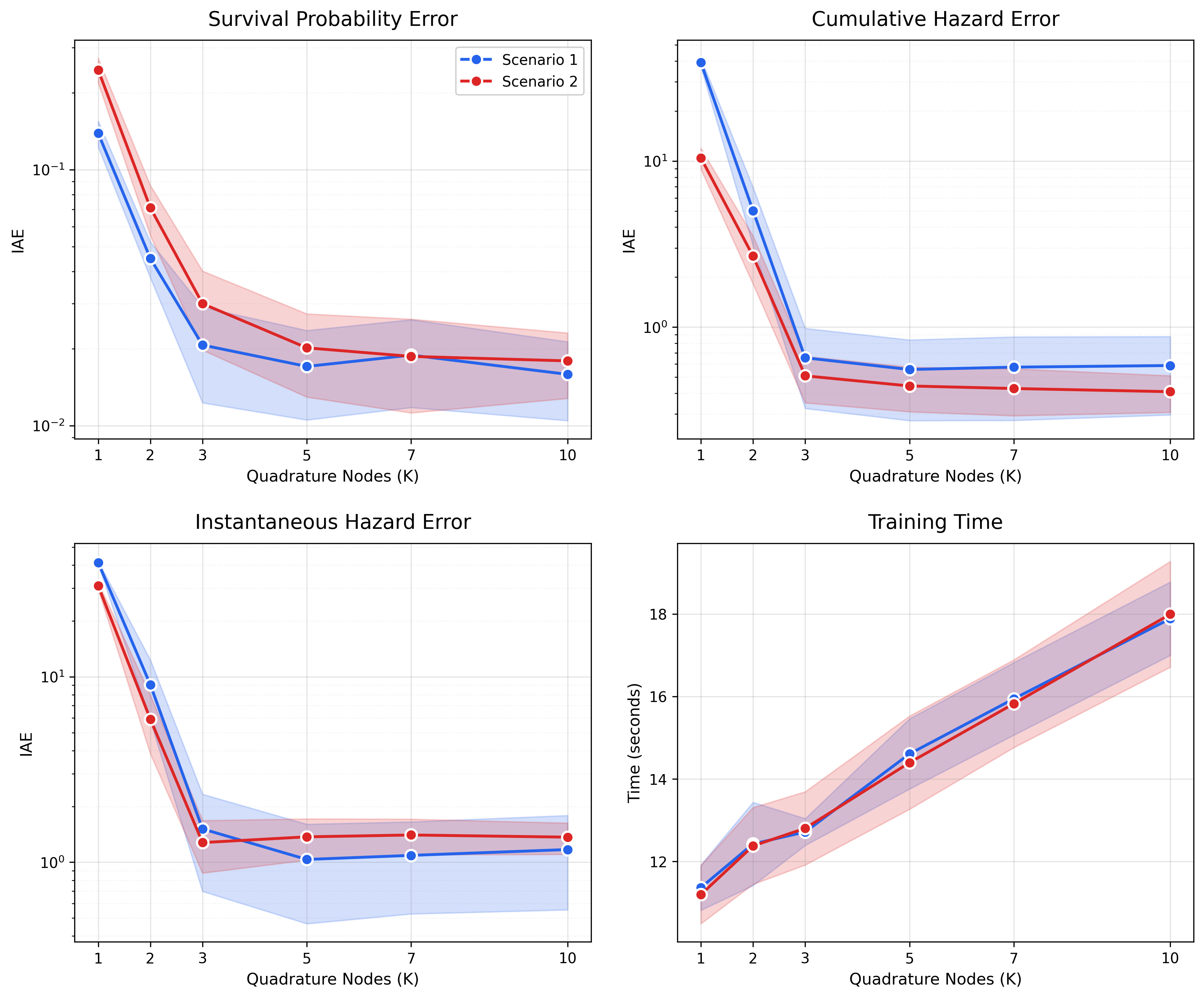}
    \caption{Convergence of approximation error and training efficiency on two simulation scenarios. The first three plots display the integrated absolute error (IAE) for survival probability, cumulative hazard, and instantaneous hazard as a function of quadrature nodes ($K$). All metrics exhibit rapid convergence, saturating beyond $K = 5$ across both simulation scenarios. The fourth panel confirms that while training time increases linearly with $K$. Shaded regions denote standard deviation ($n=20$). See Appendix \ref{appendix:simulation} and \ref{appendix:num_nodes} for simulation details and estimated curves, respectively.}
    \label{fig:num_nodes}
\end{figure}

\begin{figure}[!t]
    \centering
    \includegraphics[width=0.99\linewidth]{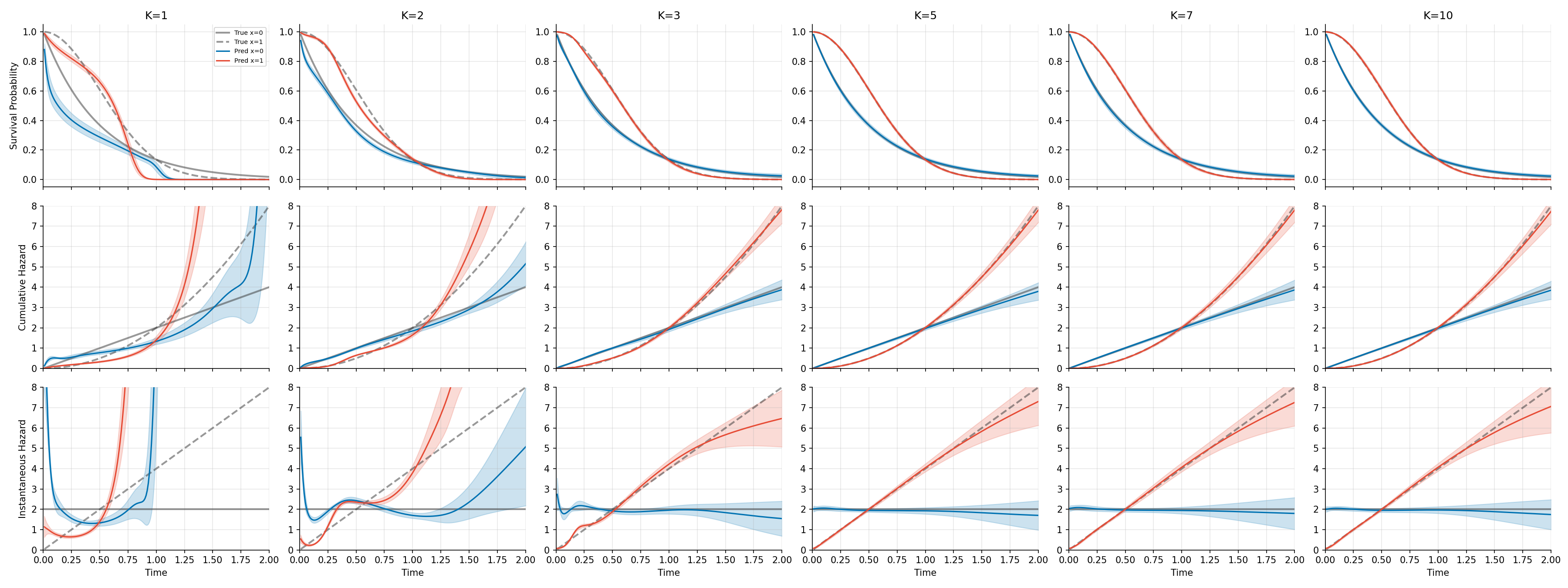}
    \caption{Instantaneous hazard, cumulative hazard, and survival functions for simulation scenario 1.}
    \label{fig:num_node_toy1}
\end{figure}

\begin{figure}[!t]
    \centering
    \includegraphics[width=0.99\linewidth]{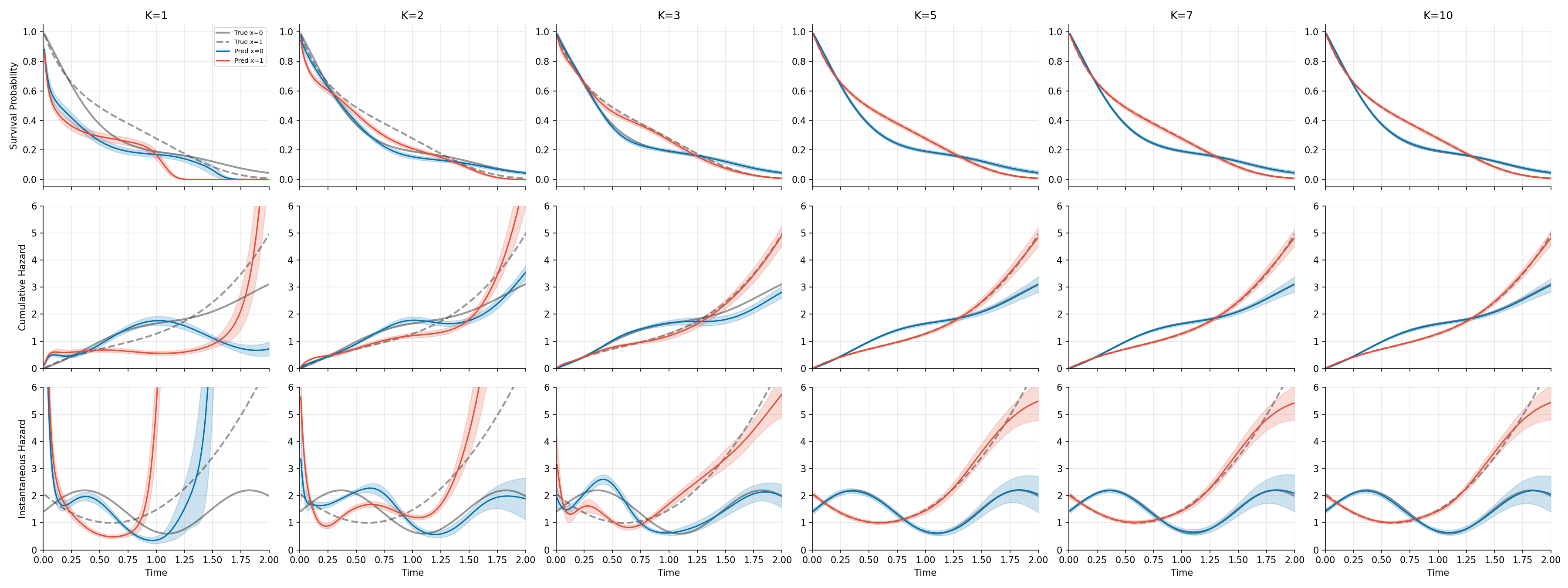}
    \caption{Instantaneous hazard, cumulative hazard, and survival functions for simulation scenario 2.}
    \label{fig:num_node_toy2}
\end{figure}

We further investigate how the number of quadrature nodes $K$ affects the estimation of the instantaneous hazard, cumulative hazard, and survival function. Although QSurv represents $\lambda_\theta(t | x)$ as a continuous-time neural function, the training objective evaluates this function only at finitely many time points: the observed event times through the log-hazard contribution and the subject-specific quadrature nodes through the cumulative-hazard contribution. Therefore, if $K$ is too small relative to the temporal complexity of the hazard, the quadrature-based objective may fail to resolve rapid changes in risk over time.

This observation clarifies the role of $K$ in QSurv. Unlike the number of bins in discrete-time survival models, $K$ does not define the temporal resolution of the hazard model itself. The hazard remains a continuous function of time. Instead, $K$ controls the numerical precision with which the cumulative hazard is approximated inside the likelihood. Reducing $K$ should therefore not be interpreted as imposing a simpler or more regularized hazard model. Rather, very small values of $K$ may lead to a poor approximation of the continuous-time likelihood, especially when the hazard has sharp or non-monotone temporal structure. Conversely, once $K$ is large enough to resolve the relevant temporal variation, increasing $K$ further should have little effect on estimation accuracy, aside from increased computation.

To empirically assess this behavior, we conducted a simulation study using two synthetic scenarios with complex, non-monotone hazards. Scenario 1 is a crossing-hazards setting with a binary covariate $x \sim \mathrm{Bernoulli}(0.5)$, defined by
\begin{equation}
    S(t | x)=\exp(-t^{1+x}), \qquad \lambda(t | x)=(1+x)t^x.
\end{equation}
Thus, the $x=0$ group follows $S(t | 0)=\exp(-t)$ with constant hazard $\lambda(t | 0)=1$, whereas the $x=1$ group follows $S(t | 1)=\exp(-t^2)$ with linearly increasing hazard $\lambda(t | 1)=2t$. The hazards intersect at $t=0.5$, while the survival curves cross at $t=1$, where $S(t | 0)=S(t | 1)=e^{-1}$. Censoring times are sampled from $C \sim \mathrm{Uniform}(0,2)$.

Scenario 2 introduces higher-frequency temporal variation through anti-phase sinusoidal hazards:
\begin{equation}
\begin{aligned}
    \lambda(t | x)&=1+0.8(1-2x)\sin(4t), \\
    \Lambda(t | x)&=t+0.2(1-2x)\{1-\cos(4t)\}, \\
    S(t | x)&=\exp(-\Lambda(t | x)).
\end{aligned}
\end{equation}
For $x=0$, the hazard is $1+0.8\sin(4t)$, while for $x=1$, the hazard is $1-0.8\sin(4t)$. This scenario creates oscillating risk dynamics with multiple hazard intersections, requiring the model to resolve rapid temporal changes in risk. Survival times are generated by inverse transform sampling, with censoring drawn from an exponential distribution with rate approximately $0.33$.

We trained QSurv models with $K \in \{1,2,3,5,7,10\}$ quadrature nodes across 20 independent runs. The results are summarized in Figure~\ref{fig:num_nodes}. We observe a rapid decrease in integrated absolute error for the instantaneous hazard, cumulative hazard, and survival function as $K$ increases. The error curves stabilize after a moderate number of quadrature nodes, indicating that the quadrature approximation has become sufficiently accurate for these settings. As expected, training time increases approximately linearly with $K$, reflecting the additional neural network parameterized hazard evaluations required at the quadrature nodes.

Figures~\ref{fig:num_node_toy1} and~\ref{fig:num_node_toy2} show the corresponding estimated trajectories. With very small values of $K$, the fitted hazard can exhibit visible approximation artifacts and may fail to capture important temporal features of the true hazard. In contrast, moderate values of $K$ recover stable estimates of the instantaneous hazard, cumulative hazard, and survival function. These findings support our practical use of a fixed moderate quadrature order in the main experiments.


\section{Simulation Study}\label{appendix:simulation}

\begin{table}[b!]
\centering
\caption{Data generation mechanisms. All covariate-dependent parameters are modeled as $\theta(x) = \exp(\mathcal{P}(x; \mathbf{w}))$ to ensure positivity, except for $\mu(x)$ in the Log-Normal distribution, which is modeled directly as $\mu(x) = \mathcal{P}(x; \mathbf{w})$. The polynomial function is defined as $\mathcal{P}(x; \mathbf{w}) = w_0 + w_1 x + w_2 x^2 + w_3 x^3$.}
\label{tab:simulation_equations_values}
\small
\begin{tabular}{lll}
\toprule
\textbf{Distribution} & \textbf{Hazard / PDF Definition} & \textbf{Parameter Coefficients} \\ 
\midrule
Exponential & $h(t|x) = \lambda(x)$ & $\mathbf{w}_\lambda = [-1.0, 0.5, -0.3, 0.15]$ \\
\addlinespace
Weibull & $h(t|x) = \frac{k(x)}{\lambda(x)} \left(\frac{t}{\lambda(x)}\right)^{k(x)-1}$ & $\mathbf{w}_k = [0.3, 0.2, -0.1, 0.05]$ \\
 & & $\mathbf{w}_\lambda = [2.0, 0.3, -0.2, 0.1]$ \\
\addlinespace
Gamma & $f(t|x) = \frac{\beta(x)^{k(x)}}{\Gamma(k(x))} t^{k(x)-1} e^{-\beta(x)t}$ & $\mathbf{w}_k = [1.8, 0.3, -0.1, 0.05]$ \\
 & & $\mathbf{w}_\beta = [0.3, -0.4, 0.15, -0.05]$ \\
\addlinespace
Gompertz & $h(t|x) = b(x) e^{c t}$ & $\mathbf{w}_b = [-2.0, 0.4, -0.2, 0.1]$ \\
 & & $c = 0.05$  \\
\addlinespace
Log-Normal & $T = \exp(\mu(x) + \sigma(x) Z), \; Z \sim \mathcal{N}(0,1)$ & $\mathbf{w}_\mu = [1.5, 0.8, -0.4, 0.2]$ \\
 & & $\mathbf{w}_\sigma = [-0.1, 0.25, -0.10, 0.03]$ \\
\addlinespace
Log-Logistic & $h(t|x) = \frac{(\beta(x)/\alpha(x)) (t/\alpha(x))^{\beta(x)-1}}{1 + (t/\alpha(x))^{\beta(x)}}$ & $\mathbf{w}_\alpha = [1.2, 0.4, -0.15, 0.08]$ \\
 & & $\mathbf{w}_\beta = [1.0, 0.3, -0.1, 0.05]$ \\
\bottomrule
\end{tabular}%
\end{table}

To evaluate the capability of the proposed models to capture complex, non-linear relationships between covariates and survival outcomes, we generated synthetic datasets using six parametric distributions: Exponential, Weibull, Gamma, Gompertz, Log-Normal, and Log-Logistic. For all scenarios, we simulated a one-dimensional covariate $x$ drawn from a uniform distribution between -1 and 1. The distribution parameters, such as shape, scale, or rate, were modeled as non-linear functions of $x$. We employed third-order polynomial transformations followed by an exponential link function to ensure positivity, except for the location parameter in the Log-Normal distribution. The polynomial coefficients were manually tuned to ensure diverse hazard shapes and stable event times within the observation window.

The chosen distributions cover a wide range of survival behaviors. The Exponential and Gompertz scenarios represent Proportional Hazards settings, whereas the Weibull, Gamma, Log-Normal, and Log-Logistic distributions induce non-proportional hazards where the shape of the distribution changes conditional on the covariate. Notably, the Log-Normal and Log-Logistic distributions generate unimodal hazard functions that rise and then fall, mimicking patterns often seen in medical contexts.

Censoring times were generated independently of the covariates using a uniform distribution, where the upper bound was dynamically calibrated for each dataset to achieve a target censoring rate of approximately 20\%. The observed time was defined as the minimum of the true event time and the censoring time. For each simulation run, we generated 2,000 training and testing samples, respectively. The experiments were repeated 20 times with different random seeds to report the mean and standard deviation of the results. All neural network baselines used a standardized architecture with two hidden layers of 32 units and Tanh activations. To strictly assess how well the models recovered the underlying distributions, we calculated the L1 error for the hazard, cumulative hazard, and survival functions on the testing samples by comparing the model estimates against the true analytical functions.

To assess the model's ability to recover the global system dynamics independent of individual covariate variations, we computed the marginalized survival, cumulative hazard, and instantaneous hazard curves. These are obtained by averaging the conditional functions over the distribution of the covariate space $\mathcal{X}$:
\begin{equation}
\begin{aligned}
    \bar{S}(t) & = \int_{\mathcal{X}} S(t|x) p(x) dx \approx \frac{1}{n} \sum_{i=1}^{n} \hat{S}(t|x_i), \\
    \bar{\Lambda}(t) & = \int_{\mathcal{X}} \Lambda(t|x) p(x) dx \approx \frac{1}{n} \sum_{i=1}^{n} \hat{\Lambda}(t|x_i), \\
    \bar{\lambda}(t) & = \int_{\mathcal{X}} \lambda(t|x) p(x) dx \approx \frac{1}{n} \sum_{i=1}^{n} \hat{\lambda}(t|x_i).
\end{aligned}
\end{equation}
By comparing these marginalized trajectories against the true analytical integrations, we can visualize whether the model correctly captures the aggregate behavior of the population, including the mean hazard shape and the overall survival decay, even when the underlying individual parameters $\theta$ vary non-linearly. Figures \ref{fig:sim_exponential}--\ref{fig:sim_loglogistic} present the results.

\begin{table}[!t]
\centering
\caption{Performance comparison of survival models across various distributions. Values are Mean $\pm$ Std (20 seeds). Best results are \textbf{bold}, second best are \underline{underlined}. Training times are wall-clock seconds on a single Apple~M4 CPU (10-core, 32~GB RAM); relative comparisons across rows are meaningful but absolute values are machine-dependent.}
\label{tab:simulation_full}
\resizebox{\textwidth}{!}{%
\begin{tabular}{ccccccccc}
\toprule
\textbf{Metric} & \textbf{Distribution} & \textbf{CoxCC} & \textbf{CoxTime} & \textbf{NnetSurv} & \textbf{MDN} & \textbf{DeSurv} & \textbf{SODEN} & \textbf{QSurv} \\
\midrule
\multirow{6}{*}{\shortstack[c]{\textbf{$L_1$ Error of} \\ \textbf{$\hat{S}$}}}
 & Exponential  & \textbf{0.0122 $\pm$ 0.0038} & \underline{0.0130 $\pm$ 0.0042} & 0.0266 $\pm$ 0.0084 & 0.0155 $\pm$ 0.0055 & 0.0177 $\pm$ 0.0057 & 0.0167 $\pm$ 0.0069 & 0.0162 $\pm$ 0.0058 \\
 & Weibull      & 0.0268 $\pm$ 0.0041 & 0.0144 $\pm$ 0.0038 & 0.0236 $\pm$ 0.0041 & 0.0354 $\pm$ 0.0037 & \textbf{0.0127 $\pm$ 0.0043} & 0.0139 $\pm$ 0.0053 & \underline{0.0133 $\pm$ 0.0046} \\
 & Gamma        & 0.0197 $\pm$ 0.0041 & 0.0138 $\pm$ 0.0035 & 0.0213 $\pm$ 0.0032 & 0.1056 $\pm$ 0.0028 & 0.0121 $\pm$ 0.0035 & \textbf{0.0107 $\pm$ 0.0029} & \underline{0.0110 $\pm$ 0.0030} \\
 & Gompertz     & 0.0121 $\pm$ 0.0040 & 0.0143 $\pm$ 0.0054 & 0.0276 $\pm$ 0.0058 & 0.0202 $\pm$ 0.0043 & 0.0129 $\pm$ 0.0043 & \textbf{0.0115 $\pm$ 0.0034} & \underline{0.0120 $\pm$ 0.0032} \\
 & Log-Normal    & 0.0166 $\pm$ 0.0065 & 0.0166 $\pm$ 0.0049 & 0.0199 $\pm$ 0.0054 & 0.0272 $\pm$ 0.0034 & \textbf{0.0152 $\pm$ 0.0054} & 0.0165 $\pm$ 0.0054 & \underline{0.0160 $\pm$ 0.0064} \\
 & Log-Logistic  & 0.0356 $\pm$ 0.0040 & 0.0157 $\pm$ 0.0042 & 0.0209 $\pm$ 0.0035 & 0.0567 $\pm$ 0.0041 & \textbf{0.0144 $\pm$ 0.0041} & \underline{0.0150 $\pm$ 0.0042} & 0.0154 $\pm$ 0.0042 \\
\midrule
\multirow{6}{*}{\shortstack[c]{\textbf{$L_1$ Error of} \\ \textbf{$\hat{\Lambda}$}}}
 & Exponential  & \textbf{0.1780 $\pm$ 0.0479} & \underline{0.2013 $\pm$ 0.0638} & 0.4163 $\pm$ 0.1258 & 0.2244 $\pm$ 0.0958 & 0.2187 $\pm$ 0.0900 & 0.2144 $\pm$ 0.0827 & 0.2221 $\pm$ 0.0923 \\
 & Weibull      & 0.7486 $\pm$ 0.1803 & 0.2372 $\pm$ 0.1229 & 0.5065 $\pm$ 0.1114 & 0.5137 $\pm$ 0.1197 & 0.1974 $\pm$ 0.0644 & \textbf{0.1887 $\pm$ 0.0842} & \underline{0.1894 $\pm$ 0.0938} \\
 & Gamma        & 2.4663 $\pm$ 0.3089 & 1.1946 $\pm$ 0.6717 & 1.3610 $\pm$ 0.4712 & 2.8405 $\pm$ 0.4260 & 0.6507 $\pm$ 0.2066 & \underline{0.5796 $\pm$ 0.2705} & \textbf{0.5766 $\pm$ 0.2472} \\
 & Gompertz     & 0.2773 $\pm$ 0.1030 & 0.3699 $\pm$ 0.2559 & 0.7631 $\pm$ 0.1900 & 0.6015 $\pm$ 0.1340 & 0.2870 $\pm$ 0.1289 & \textbf{0.1964 $\pm$ 0.1009} & \underline{0.1990 $\pm$ 0.1021} \\
 & Log-Normal    & 0.8253 $\pm$ 0.2925 & 0.6984 $\pm$ 0.3496 & \underline{0.4052 $\pm$ 0.1085} & 1.0280 $\pm$ 0.4111 & \textbf{0.3427 $\pm$ 0.1220} & 0.4213 $\pm$ 0.1943 & 0.4205 $\pm$ 0.2091 \\
 & Log-Logistic  & 0.6392 $\pm$ 0.1234 & 0.2488 $\pm$ 0.0619 & 0.3262 $\pm$ 0.0438 & 0.4433 $\pm$ 0.0416 & \textbf{0.2279 $\pm$ 0.1152} & 0.2358 $\pm$ 0.1280 & \underline{0.2305 $\pm$ 0.1158} \\
\midrule
\multirow{6}{*}{\shortstack[c]{\textbf{$L_1$ Error of} \\ \textbf{$\hat{\lambda}$}}}
 & Exponential  & 0.1775 $\pm$ 0.0410 & 0.1775 $\pm$ 0.0422 & 0.3830 $\pm$ 0.0438 & 0.0568 $\pm$ 0.0446 & 0.0448 $\pm$ 0.0159 & \textbf{0.0330 $\pm$ 0.0138} & \underline{0.0352 $\pm$ 0.0164} \\
 & Weibull      & 0.1576 $\pm$ 0.0322 & 0.1182 $\pm$ 0.0207 & 0.2383 $\pm$ 0.0254 & 0.0644 $\pm$ 0.0102 & 0.0295 $\pm$ 0.0090 & \textbf{0.0272 $\pm$ 0.0119} & \underline{0.0282 $\pm$ 0.0139} \\
 & Gamma        & 0.9012 $\pm$ 0.0921 & 0.5661 $\pm$ 0.1769 & 0.7249 $\pm$ 0.0370 & 0.5646 $\pm$ 0.0459 & 0.2930 $\pm$ 0.0907 & \textbf{0.1716 $\pm$ 0.0901} & \underline{0.1718 $\pm$ 0.0859} \\
 & Gompertz     & 0.1713 $\pm$ 0.0262 & 0.1890 $\pm$ 0.0549 & 0.2921 $\pm$ 0.0276 & 0.0920 $\pm$ 0.0140 & 0.0502 $\pm$ 0.0278 & \underline{0.0262 $\pm$ 0.0174} & \textbf{0.0258 $\pm$ 0.0180} \\
 & Log-Normal    & 0.1235 $\pm$ 0.0248 & 0.1114 $\pm$ 0.0294 & 0.1938 $\pm$ 0.0219 & 0.1882 $\pm$ 0.0520 & 0.0473 $\pm$ 0.0201 & \underline{0.0462 $\pm$ 0.0183} & \textbf{0.0457 $\pm$ 0.0185} \\
 & Log-Logistic  & 0.1621 $\pm$ 0.0210 & 0.1266 $\pm$ 0.0242 & 0.3290 $\pm$ 0.0525 & 0.1031 $\pm$ 0.0090 & \underline{0.0513 $\pm$ 0.0299} & 0.0517 $\pm$ 0.0295 & \textbf{0.0491 $\pm$ 0.0291} \\
\midrule
\multirow{6}{*}{\shortstack[c]{\textbf{Training} \\ \textbf{Time (s)}}}
 & Exponential  & 5.3210 $\pm$ 0.7844 & 6.7244 $\pm$ 0.8221 & 2.4345 $\pm$ 0.2233 & 3.5458 $\pm$ 0.2663 & 6.1273 $\pm$ 1.2448 & 87.9615 $\pm$ 8.5607 & 6.0749 $\pm$ 0.8276 \\
 & Weibull      & 5.0815 $\pm$ 0.4088 & 6.6068 $\pm$ 0.5126 & 2.4946 $\pm$ 0.4445 & 3.6100 $\pm$ 0.5023 & 5.9509 $\pm$ 0.6691 & 124.1380 $\pm$ 17.2815 & 5.7566 $\pm$ 0.4297 \\
 & Gamma        & 4.6292 $\pm$ 0.3867 & 6.3100 $\pm$ 0.6772 & 2.2909 $\pm$ 0.2931 & 3.4942 $\pm$ 0.4543 & 6.0968 $\pm$ 0.9452 & 149.9774 $\pm$ 24.4947 & 6.2316 $\pm$ 1.1584 \\
 & Gompertz     & 4.4591 $\pm$ 0.2230 & 5.9313 $\pm$ 0.3321 & 1.9610 $\pm$ 0.1278 & 3.0111 $\pm$ 0.0375 & 5.5127 $\pm$ 0.4924 & 82.9457 $\pm$ 10.5067 & 5.3545 $\pm$ 0.3065 \\
 & Log-Normal    & 5.8264 $\pm$ 2.2276 & 7.8568 $\pm$ 3.4891 & 2.8560 $\pm$ 1.1757 & 4.3230 $\pm$ 1.8299 & 8.7879 $\pm$ 6.4600 & 243.5298 $\pm$ 117.4764 & 8.0038 $\pm$ 4.6329 \\
 & Log-Logistic  & 4.3939 $\pm$ 0.8087 & 5.8667 $\pm$ 1.2868 & 2.3401 $\pm$ 1.8595 & 3.3814 $\pm$ 0.9135 & 5.0744 $\pm$ 0.6543 & 131.3422 $\pm$ 13.7360 & 5.7363 $\pm$ 0.6092 \\
\bottomrule
\end{tabular}%
}
\end{table}

\section{LoRA Ablation Study}
\label{appendix:lora_ablation}

\begin{table}[!t]
    \small
    \centering
    \setlength{\tabcolsep}{4pt}
    \caption{Comparison of QSurv conditioning variants on $L_1$ error of instantaneous hazard (smaller the better) across different parametric distributions, with standard deviations over 20 random seeds. Best results are \textbf{bold}, second best are \underline{underlined}.}
    \label{tab:simulation_qsurv_variants}
    \begin{tabular}{cccc}
    \toprule
    \textbf{Distribution} & \textbf{QSurv (Concat)} & \textbf{QSurv (FiLM)} & \textbf{QSurv (LoRA)} \\
    \midrule
    Exponential  & \underline{0.0327 $\pm$ 0.0159} & \textbf{0.0260 $\pm$ 0.0104} & 0.0352 $\pm$ 0.0164 \\
    Weibull      & \textbf{0.0278 $\pm$ 0.0055} & 0.0345 $\pm$ 0.0182 & \underline{0.0282 $\pm$ 0.0139} \\
    Gamma        & \textbf{0.1199 $\pm$ 0.0344} & 0.2033 $\pm$ 0.0894 & \underline{0.1718 $\pm$ 0.0859} \\
    Gompertz     & 0.0506 $\pm$ 0.0157 & \underline{0.0421 $\pm$ 0.0403} & \textbf{0.0258 $\pm$ 0.0180} \\
    Log-normal   & \textbf{0.0431 $\pm$ 0.0154} & 0.0469 $\pm$ 0.0197 & \underline{0.0457 $\pm$ 0.0185} \\
    Log-logistic & 0.0939 $\pm$ 0.0283 & \underline{0.0590 $\pm$ 0.0507} & \textbf{0.0491 $\pm$ 0.0291} \\
    \bottomrule
    \end{tabular}
\end{table}

We performed an ablation study to compare different time-conditioning mechanisms within QSurv. Specifically, we compared three variants: direct concatenation of time and covariates, FiLM-based conditioning, and the proposed Time-LoRA conditioning. All variants used the same simulation settings, training protocol, and evaluation metric. Performance was measured by the \(L_1\) error between the predicted and true instantaneous hazard functions, averaged over 20 random seeds.

Table~\ref{tab:simulation_qsurv_variants} shows that no single conditioning mechanism uniformly dominates across all data-generating distributions. Direct concatenation performs best for Weibull, Gamma, and log-normal hazards, suggesting that simple conditioning can be sufficient when the hazard structure is relatively easy to represent in the low-dimensional simulation setting. FiLM performs best for the exponential setting and remains competitive for Gompertz and log-logistic hazards. Time-LoRA achieves the best performance for Gompertz and log-logistic hazards, and is second best for Weibull, Gamma, and log-normal distributions.

These results suggest that Time-LoRA is a competitive time-conditioning mechanism for hazard modeling, particularly in settings where temporal risk patterns require more flexible interactions between time and covariates. At the same time, the ablation indicates that in simple low-dimensional simulations, direct concatenation can remain a strong baseline. The main motivation for Time-LoRA is therefore not that it uniformly improves all synthetic settings, but that it provides an efficient and flexible mechanism for conditioning high-dimensional neural backbones on time while preserving competitive hazard-estimation accuracy.

Figures \ref{fig:sim_exponential}--\ref{fig:sim_loglogistic} present the visualizations of the fitted curves.

\section{Experiment Details}\label{appendix:experiments}

\subsection{Datasets}\label{appendix:datasets}

We evaluated all methods on a mixture of tabular and imaging survival datasets. The main benchmark includes six tabular datasets (GBSG, METABRIC, NWTCO, Framingham, FLCHAIN, and SUPPORT2) and three imaging datasets (COVID-19-NY, C4KC-KiTS, and BraTS).

\textbf{GBSG} comes from a multicenter randomized clinical trial conducted by the German Breast Cancer Study Group to evaluate adjuvant treatment strategies in patients with lymph node-positive breast cancer \citep{schumacher1994randomized}. The endpoint is breast cancer recurrence after treatment. The public dataset contains 686 patients, with 299 observed recurrence events (43.6\%) and 387 right-censored observations (56.4\%). We used eight covariates covering demographic features, tumor characteristics, hormone receptor status, and treatment information. Categorical variables were one-hot encoded.

\textbf{METABRIC} comes from the Molecular Taxonomy of Breast Cancer International Consortium and was designed to study molecular subtypes of breast cancer \citep{curtis2012genomic}. We used the version distributed with Pycox, following the Immunohistochemical 4 plus Clinical (IHC4+C) covariate set \citep{katzman2018deepsurv}. The input variables include four gene-expression markers (MKI67, EGFR, PGR, and ERBB2) and five clinical variables (hormone therapy, radiotherapy, chemotherapy, estrogen receptor status, and age). The dataset contains 1,904 patients, with 1,103 deaths (57.9\%) and 801 censored observations (42.1\%). Missing categorical values were imputed with the most frequent category before one-hot encoding.

\textbf{NWTCO} is from the third and fourth National Wilms Tumor Study Group trials and includes pediatric patients diagnosed with Wilms tumor between 1979 and 1994 \citep{breslow1999design}. We analyzed 4,028 patients with complete relapse outcome and covariate information. The covariates include disease stage, age, and subcohort membership. There were 571 observed events (14.1\%) and 3,457 censored observations (85.8\%).

\textbf{Framingham} uses the Framingham Heart Study longitudinal teaching dataset distributed by NHLBI/BioLINCC \citep{nhlbi2022biolincc}. This is a Framingham-derived teaching dataset rather than the full Framingham cohort. It contains longitudinal clinic, questionnaire, laboratory, and adjudicated event data on 4,434 participants from three examination periods, yielding 11,627 person-exam observations and 39 variables. Outcomes include angina, myocardial infarction, stroke or cardiovascular disease, hypertension, and death, with corresponding time-to-event or censoring variables. In our benchmark, we used cardiovascular disease (CVD) as the survival endpoint, with \texttt{CVD} as the event indicator and \texttt{TIMECVD} as the observed follow-up time. CVD events were observed in 996 observations (23.3\%), and 3,277 observations (76.7\%) were right-censored. 

\textbf{FLCHAIN} comes from a study of serum free light chains and mortality among residents aged 50 years or older in Olmsted County, Minnesota \citep{dispenzieri2012use}. The final sample contains 7,874 participants selected by stratified random sampling over age and sex. We used six covariates: age, creatinine, kappa and lambda free light chain levels, MGUS status, and sex. There were 2,167 deaths (27.5\%) and 5,707 censored observations (72.5\%). Missing continuous values were imputed with the mean.

\textbf{SUPPORT2} comes from the SUPPORT Phase II study, a multicenter study of seriously ill hospitalized adults conducted between 1992 and 1994 \citep{connors1995controlled}. We used 9,105 observations with 47 covariates, including demographics, disease severity, physiologic measurements, and clinical status variables. Death was observed in 6,201 observations (68.1\%), and 2,904 observations (31.9\%) were right-censored. Observations with follow-up of three days or less were excluded so that baseline covariates preceded the event time.

\textbf{COVID-19-NY} consists of patients who tested positive for COVID-19 at Stony Brook University. We used the linked clinical file and AP chest radiographs from the TCIA release. For each patient, the radiograph study closest to the hospital visit date was selected among portable/AP chest views, and the first available DICOM image from that study was converted to a grayscale PNG. The endpoint was in-hospital death, coded from discharge status, and observed time was measured in days from the selected imaging study to discharge or death. Images were padded to a square field of view and resized to $224 \times 224$ before model input.

\textbf{C4KC-KiTS} is a TCIA renal cancer imaging dataset with arterial-phase abdominal CT and kidney/tumor segmentations. Original imaging consisted of CT DICOM series and corresponding segmentation objects. For the 2D slice selection, CT volumes were resampled to 1 mm isotropic spacing, clipped to an abdominal soft-tissue window ($[-160,240]$ HU), and reduced to the axial slice with the largest kidney tumor cross-section according to the segmentation mask. The selected slice was resized to $224 \times 224$ and used as a single-channel grayscale image. The endpoint was overall survival after surgery: observed time was \texttt{vital\_days\_after\_surgery}, and death was coded from \texttt{vital\_status}; alive or censored patients were treated as right-censored. Only malignant cases with valid follow-up were included.

\textbf{BraTS} uses the BraTS 2020 glioma MRI training data \citep{menze2015brats, bakas2017advancing, bakas2018identifying}. Original imaging consisted of multi-parametric brain MRI volumes with segmentation masks; in this benchmark we used the FLAIR sequence. For the 2D slice selection, we extracted the axial FLAIR slice with the largest tumor cross-section using the segmentation mask, falling back to the center slice only if the mask was unavailable or empty. Intensities were clipped at the within-slice 99th percentile, normalized, and saved as $224 \times 224$ grayscale images. The endpoint was overall survival time in days from the BraTS survival file. We constructed the event indicator from the available vital-status/alive annotation in the processed survival file; subjects annotated as alive with follow-up time were treated as right-censored, while subjects without an alive annotation were treated as observed events.

\subsection{Evaluation Horizons}

Metrics were reported at three horizons: the full supported time horizon and two shorter horizons. The full horizon was chosen from the training split by estimating the censoring survival function and taking the largest time point with adequate censoring support, operationalized in the code as $\hat{G}(\tau) \geq 0.001$. The IPCW weights themselves were then capped during metric computation, as described below. The Q1 and Q2 horizons were defined from the observed time distribution of the held-out test split. These horizons were used only for reporting, not for model fitting or hyperparameter selection. In datasets with administrative censoring pile-ups, some upper quantile horizons can coincide with the full horizon; this is reported as part of the dataset behavior rather than manually adjusted.

\subsection{Evaluation Metrics}\label{appendix:metrics}

\textbf{Time-dependent C-index.} We used an IPCW version of the time-dependent C-index to evaluate discrimination under right censoring \citep{uno2011c}. Let $\hat{S}_i(t)=\hat{S}_\theta(t|x_i)$ be the predicted survival probability for subject $i$, and let $\hat{G}(t)=P(C>t)$ be the censoring survival function estimated by Kaplan-Meier on the training split. Comparable pairs are formed from subjects with observed events before the evaluation horizon. A pair is concordant when the subject who fails earlier is assigned a lower predicted survival probability at that event time.

\[
C_\tau^{td} = 
\frac{\sum_{i, \delta_i=1, o_i<\tau} \sum_{j: o_i < o_j} \mathbb{I}\{\hat{S}_i(o_i)<\hat{S}_j(o_i)\} w_i}
     {\sum_{i, \delta_i=1, o_i<\tau} \sum_{j: o_i < o_j} w_i},
\qquad
w_i = \min\{1/\hat{G}^2(o_i), 10\}.
\]

The cap at 10 was used for all IPCW-based metrics. It is a numerical stabilization step, not a subject-level exclusion rule. Subjects were not removed simply because they occurred late in follow-up.

\textbf{Integrated Brier Score.} The Brier score measures the squared error between predicted survival and the observed survival status at a fixed time. We used the IPCW form proposed for right-censored survival data \citep{graf1999assessment}. At time $t$,

\[
\text{BS}(t) = \frac{1}{n} \sum_{i=1}^{n} \Biggl[
\frac{\bigl(\hat{S}_i(t)\bigr)^2 \mathbb{I}(o_i < t, \delta_i = 1)}{\hat{G}(o_i)}
+
\frac{\bigl(1 - \hat{S}_i(t)\bigr)^2 \mathbb{I}(o_i > t)}{\hat{G}(t)}
\Biggr],
\]

with the same IPCW cap applied to the inverse censoring weights. The integrated Brier score (IBS) was computed by numerical integration over the evaluation interval. Lower IBS indicates better calibrated survival prediction.

\textbf{Integrated binomial log-likelihood.} We also report an IPCW binomial log-likelihood, which is sensitive to both calibration and discrimination. At time $t$,

\[
\text{BLL}(t) = \frac{1}{n} \sum_{i=1}^{n} \Biggl[
\frac{\log\bigl(1 - \hat{S}_i(t)\bigr) \mathbb{I}(o_i < t, \delta_i = 1)}{\hat{G}(o_i)}
+
\frac{\log\bigl(\hat{S}_i(t)\bigr) \mathbb{I}(o_i > t)}{\hat{G}(t)}
\Biggr].
\]

The integrated version, denoted IBLL, was computed over the same time grid as IBS. Because this is a log-likelihood rather than a loss, higher values are better.

\textbf{D-calibration.}
We also report distributional calibration (D-calibration), following \citep{haider2020effective}. D-calibration assesses whether the model's predicted individual survival distributions are calibrated over the full follow-up range, rather than at a single fixed time point. Under a correctly specified continuous survival model, the probability integral transform implies that $F(T | X)$ is uniformly distributed on $[0,1]$; equivalently, $S(T | X)$ is also uniformly distributed on $[0,1]$. Thus, for uncensored subjects, we evaluate the predicted survival probability at the observed event time, $\hat{S}_i(o_i)$.

Right-censoring requires a partial contribution because the true event time is only known to exceed $o_i$. On the survival probability scale, the unobserved value $\hat{S}_i(T_i)$ must lie in the interval $[0,\hat{S}_i(o_i)]$. Following the standard D-calibration construction, each censored subject's contribution is therefore distributed uniformly over this compatible interval. We divide $[0,1]$ into 10 equal-width bins and compare the resulting bin counts with the uniform distribution using a chi-square goodness-of-fit statistic with 9 degrees of freedom. We report the corresponding $p$-value, where larger values indicate less evidence against distributional calibration. Because this is a hypothesis-test diagnostic rather than an effect-size metric, we use D-calibration as a secondary calibration assessment rather than as the primary model-ranking criterion.

\subsection{Implementation Details}

\subsubsection*{Tabular Data}

For tabular datasets, all models used a multilayer perceptron (MLP) backbone to extract fixed-dimensional representations from input covariates. The MLP consists of multiple hidden layers with nonlinear activations, dropout, and optional batch normalization. Within each experiment, all models used the same backbone search space, followed by model-specific output heads determined by each survival modeling formulation.

We performed hyperparameter optimization using random search. Each candidate configuration was trained for up to 200 epochs, and the final configuration was selected based on the best validation $C_{td}$, with negative validation IBS used as a tie-breaker. We used AdamW as the optimizer with a cosine learning-rate schedule, setting $T_{\max}$ to the maximum number of epochs and $\eta_{\min}=0$. For each tabular dataset and model, we evaluated 30 random hyperparameter configurations. The search space is summarized in Table~\ref{tab:hpo_tabular}. For NnetSurv and DeepHit, the number of discrete time bins was fixed at 50. For MDN, the number of mixture components was fixed at 5.

\begin{table}[h]
\centering
\caption{Hyperparameter search space for tabular datasets.}
\label{tab:hpo_tabular}
\begin{tabular}{l c}
\toprule
Hyperparameter & Search space \\
\midrule
Number of dense hidden layers & $\{2,3,4\}$ \\
Hidden size & $\{32,64,128,256\}$ \\
Learning rate & LogUniform$(10^{-4},10^{-2})$ \\
Weight decay & LogUniform$(10^{-8},10^{-3})$ \\
Dropout & $\{0.0,0.1,0.3,0.5\}$ \\
Batch size & $\{64,128,256\}$ \\
Batch normalization & $\{\texttt{True},\texttt{False}\}$ \\
\bottomrule
\end{tabular}
\end{table}

\subsubsection*{Medical Imaging Data}

For medical imaging datasets, all models used ResNet-18 as the convolutional backbone. The original fully connected layer was removed, and the extracted feature vector was passed to a model-specific survival head. The first convolutional layer was modified when the input channel dimension differed from the standard three-channel RGB setting. For single-channel inputs with pretrained weights, the first-layer filters were initialized by averaging the pretrained RGB filters across channels. As shown in Figure~\ref{fig:resnet18_backbone}, all image-based models shared the same ResNet-18 feature extractor, while the final prediction heads differed according to each model's output parameterization.

We performed random-search hyperparameter optimization for medical imaging models using the search space in Table~\ref{tab:hpo_image}. Each candidate configuration was trained for up to 200 epochs, and the final configuration was selected using validation $C_{td}$, with negative validation IBS used as a tie-breaker. We used AdamW with a cosine learning-rate schedule, again setting $T_{\max}$ to the maximum number of epochs and $\eta_{\min}=0$. For each imaging dataset and model, we evaluated 20 random hyperparameter configurations. Hidden-layer dimensions were not searched because the feature dimension was fixed by the ResNet-18 backbone. As in the tabular experiments, the number of time bins was fixed at 50 for NnetSurv and DeepHit, and the number of mixture components was fixed at 5 for MDN.

\begin{table}[h]
\centering
\caption{Hyperparameter search space for medical imaging datasets. Hidden-layer dimensions are not searched because the feature dimension is fixed by the convolutional backbone.}
\label{tab:hpo_image}
\begin{tabular}{l c}
\toprule
Hyperparameter & Search space \\
\midrule
Learning rate & LogUniform$(10^{-5},10^{-2})$ \\
Weight decay & LogUniform$(10^{-8},10^{-3})$ \\
Dropout & $\{0.0,0.1,0.3,0.5\}$ \\
Batch size & $\{16,32,64\}$ \\
\bottomrule
\end{tabular}
\end{table}

\begin{figure}[t]
    \centering
    \includegraphics[width=0.9\linewidth]{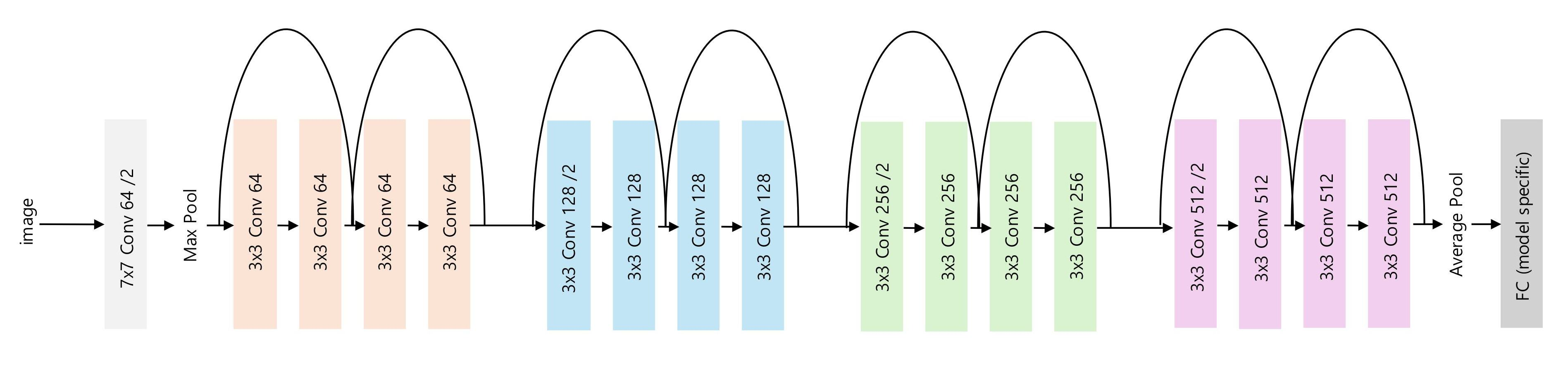}
    \caption{ResNet-18 backbone architecture used for medical imaging survival modeling. The original fully connected layer is removed, and model-specific heads are applied to the extracted feature vectors to produce survival predictions.}
    \label{fig:resnet18_backbone}
\end{figure}

\subsection{Computational Resources}
All experiments are reproducible on standard CPU or GPU hardware. Tabular experiments were run on CPU only and require a single node with at least 24~GB of system memory. Each hyperparameter optimization cell, corresponding to one method, one dataset, and one seed with 30 trials and up to 200 training epochs, takes approximately 5--30 minutes of wall-clock time depending on dataset size. The full tabular benchmark, consisting of 6 datasets, 9 methods, and 20 seeds, requires approximately 270 CPU-hours.

Image experiments require a single GPU with at least 20~GB of memory. We used NVIDIA L40S 44~GB and A100 80~GB nodes interchangeably. Each hyperparameter optimization cell, consisting of 20 trials and up to 200 training epochs, takes approximately 15-90 minutes depending on dataset size. The full image benchmark, consisting of 4 datasets, 9 methods, and 5 seeds, requires approximately 90 GPU-hours.

The synthetic simulation study can be run on a laptop-class CPU. In our experiments, the full simulation sweep was run on an Apple~M4 machine with 10 CPU cores and 32~GB RAM and completed in approximately 6-8 hours. SODEN with adjoint backpropagation dominated the simulation wall-clock time, requiring approximately 130 seconds per fit, whereas all other methods completed in approximately 1-7 seconds per fit.

Including preliminary experiments, repeated hyperparameter-configuration runs, and debugging runs that were not included in the final reported results, the total compute used for the project is estimated to be approximately 3--5 times the headline compute reported above.

\subsection{Deep Survival Analysis Models}\label{appendix:models}

\paragraph{CoxCC and CoxTime}
CoxCC and CoxTime extend the Cox proportional hazards framework by replacing the linear predictor with a neural network \citep{kvamme2019time}. CoxCC retains the proportional hazards assumption,
\begin{equation}
    \lambda(t | x) = \lambda_0(t)\exp\{g(x)\},
\end{equation}
where $\lambda_0(t)$ is the baseline hazard and $g(x)$ is a time-independent neural risk score. CoxTime relaxes this assumption by allowing the risk score to depend on time,
\begin{equation}
    \lambda(t | x) = \lambda_0(t)\exp\{g(x,t)\},
\end{equation}
thereby permitting time-varying covariate effects and crossing survival curves. We used the PyCox implementation: \texttt{https://github.com/havakv/pycox}.

To reduce the computational cost of evaluating full risk sets, these implementations use a case-control approximation to the Cox partial likelihood. For a subject $i$ who experiences an event at time $T_i$, the model compares the event subject against a sampled control set $J_i$ of individuals who remain at risk at $T_i$. The resulting loss is
\begin{equation}
    \mathcal{L} = \frac{1}{n}\sum_{i=1}^n \frac{1}{|J_i|}\sum_{j \in J_i}\log\left(1+\exp\{g(x_j)-g(x_i)\}\right).
\end{equation}
This softplus form penalizes the model when a control subject $j$ is assigned a higher risk score than the event subject $i$.

After training, a baseline hazard estimate is required to obtain absolute survival probabilities. Since the neural network predicts relative risk, the baseline hazard is recovered nonparametrically using the Breslow estimator. For a unique event time $t_m$ with $d_m$ events and risk set $\mathcal{R}(t_m)$, the CoxCC baseline hazard estimate is
\begin{equation}
    \hat{\lambda}_0(t_m) = \frac{d_m}{\sum_{j \in \mathcal{R}(t_m)} \exp\{g(x_j)\}}.
\end{equation}
For CoxTime, the risk score depends on the event time, so the network must be evaluated at $t_m$ for each subject in the risk set:
\begin{equation}
    \hat{\lambda}_0(t_m) = \frac{d_m}{\sum_{j \in \mathcal{R}(t_m)} \exp\{g(x_j,t_m)\}}.
\end{equation}
The cumulative baseline hazard $\hat{\Lambda}_0(t)$ is then obtained by summing these estimates over event times $t_m \leq t$.

\paragraph{DeepHit}
DeepHit formulates survival analysis as a discrete-time prediction problem \citep{lee2018deephit}. The time axis is divided into intervals, and the network outputs a probability mass function $\mathbf{p}=(p_1,\ldots,p_K)$, where $p_k$ is the predicted probability that the event occurs in interval $k$. The cumulative incidence function is
\begin{equation}
    F(t_k | x) = \sum_{j=1}^k p_j.
\end{equation}
We implemeted DeepHit based on the PyCox implementation: \texttt{https://github.com/havakv/pycox}.

The training objective combines a likelihood term and a ranking loss:
\begin{equation}
    \mathcal{L} = (1-\alpha)\mathcal{L}_{\mathrm{NLL}} + \alpha \mathcal{L}_{\mathrm{Rank}},
\end{equation}
where $\alpha$ controls the relative weight of the two components. For an uncensored subject whose event occurs in interval $k^*$, the likelihood contribution is $-\log p_{k^*}$. For a subject censored in interval $k^*$, the likelihood contribution is $-\log\{1-F(t_{k^*} | x)\}$. The ranking loss encourages subjects who experience earlier events to have higher predicted cumulative incidence at the corresponding event time:
\begin{equation}
    \mathcal{L}_{\mathrm{Rank}} = \sum_{A,B}\mathbb{I}(T_A<T_B)\exp\left(\frac{F(T_A | x_B)-F(T_A | x_A)}{\sigma}\right),
\end{equation}
where $\sigma$ is a smoothing parameter.

\paragraph{NnetSurv}
NnetSurv models survival in discrete time through interval-specific conditional event probabilities. Let $\lambda_j(x)$ denote the probability of an event in interval $[t_{j-1},t_j)$ conditional on survival up to $t_{j-1}$:
\begin{equation}
    \lambda_j(x) = P(T \in [t_{j-1},t_j) | T \geq t_{j-1}, x) = \sigma(f_j(x)).
\end{equation}
The corresponding survival probability at $t_k$ is
\begin{equation}
    S(t_k | x) = \prod_{j=1}^k \{1-\lambda_j(x)\}.
\end{equation}
The model is trained with a binary cross-entropy loss over all intervals in which each subject is at risk. For subject $i$ with observed interval index $k_i$ and event indicator $\delta_i$, the target is $y_{ij}=0$ for $j<k_i$ and $y_{ik_i}=\delta_i$. The loss is
\begin{equation}
    \mathcal{L} = -\sum_{i=1}^n\sum_{j=1}^{k_i}\left[y_{ij}\log \lambda_j(x_i) + (1-y_{ij})\log\{1-\lambda_j(x_i)\}\right].
\end{equation}

\paragraph{Survival Mixture Density Network}
The Survival Mixture Density Network (MDN) models the time-to-event distribution using a Gaussian mixture distribution on a latent time scale \citep{han2022survival}. We implemented MDN based on the authors' official codebase: \texttt{https://github.com/XintianHan/Survival-MDN}. To ensure positive event times, the observed time is represented as $T=\mathrm{softplus}(T^*)=\log(1+\exp\{T^*\})$, where the latent variable $T^*$ follows a conditional Gaussian mixture:
\begin{equation}
    f_{T^*}(u | x) = \sum_{k=1}^K \pi_k(x)\mathcal{N}\{u;\mu_k(x),\sigma_k(x)\}.
\end{equation}
The density of the observed event time follows from the change-of-variables formula:
\begin{equation}
    f_T(t | x) = f_{T^*}\{\mathrm{softplus}^{-1}(t) | x\}\left|\frac{d}{dt}\mathrm{softplus}^{-1}(t)\right| = f_{T^*}\{\mathrm{softplus}^{-1}(t) | x\}\frac{\exp(t)}{\exp(t)-1}.
\end{equation}
The survival function is
\begin{equation}
    S_T(t | x) = 1 - F_{T^*}\{\mathrm{softplus}^{-1}(t) | x\},
\end{equation}
where $F_{T^*}$ is the CDF of the latent Gaussian mixture. The model is trained by minimizing the censored negative log-likelihood,
\begin{equation}
    \mathcal{L} = -\sum_{i=1}^n\left[\delta_i \log f_T(t_i | x_i) + (1-\delta_i)\log S_T(t_i | x_i)\right].
\end{equation}

\paragraph{DeSurv}
DeSurv parameterizes the conditional distribution function through a monotone neural construction \citep{danks2022desurv}. Rather than modeling the hazard directly, DeSurv defines a non-negative function
\begin{equation}
    g(t,x) = \frac{\partial u(t,x)}{\partial t}
\end{equation}
using a softplus output, and constructs
\begin{equation}
    u(t,x) = \int_0^t g(s,x) ds.
\end{equation}
In our implementation, this integral is evaluated using fixed Gauss--Legendre quadrature with 15 nodes. The conditional CDF is then obtained by mapping $u(t,x)$ to $[0,1]$:
\begin{equation}
    F(t | x) = \tanh\{u(t,x)\}.
\end{equation}
The event density follows by differentiation:
\begin{equation}
    f(t | x) = \frac{\partial F(t | x)}{\partial t} = \left[1-F(t | x)^2\right]g(t,x).
\end{equation}
Thus,
\begin{equation}
    S(t | x) = 1-F(t | x), \qquad \lambda(t | x) = \frac{f(t | x)}{S(t | x)} = \{1+F(t | x)\}g(t,x).
\end{equation}
The hazard is therefore a derived quantity rather than the primitive parameterized object. DeSurv is trained by minimizing the standard censored negative log-likelihood,
\begin{equation}
    \mathcal{L} = -\frac{1}{n}\sum_{i=1}^n\left[\delta_i \log f(t_i | x_i) + (1-\delta_i)\log S(t_i | x_i)\right].
\end{equation}

\paragraph{SODEN}
SODEN is a continuous-time survival model based on Neural Ordinary Differential Equations \citep{tang2022soden}. We implemented SODEN using the authors' codebase: \texttt{https://github.com/jiaqima/SODEN}. The model represents the instantaneous hazard $\lambda(t | x)$ with a neural network and computes the cumulative hazard through the ODE integral
\begin{equation}
    \Lambda(t | x) = \int_0^t \lambda(u | x) du.
\end{equation}
This integral is evaluated during the forward pass using a differentiable adaptive ODE solver. The model is trained by minimizing the censored negative log-likelihood,
\begin{equation}
    \mathcal{L} = -\frac{1}{n}\sum_{i=1}^n\left[\delta_i \log \lambda(t_i | x_i) - \Lambda(t_i | x_i)\right].
\end{equation}
If the input time scale is transformed during preprocessing, the corresponding Jacobian adjustment is included so that the likelihood remains valid on the original time scale.

\begin{figure}[!p]
    \centering
    \includegraphics[width=0.99\linewidth]{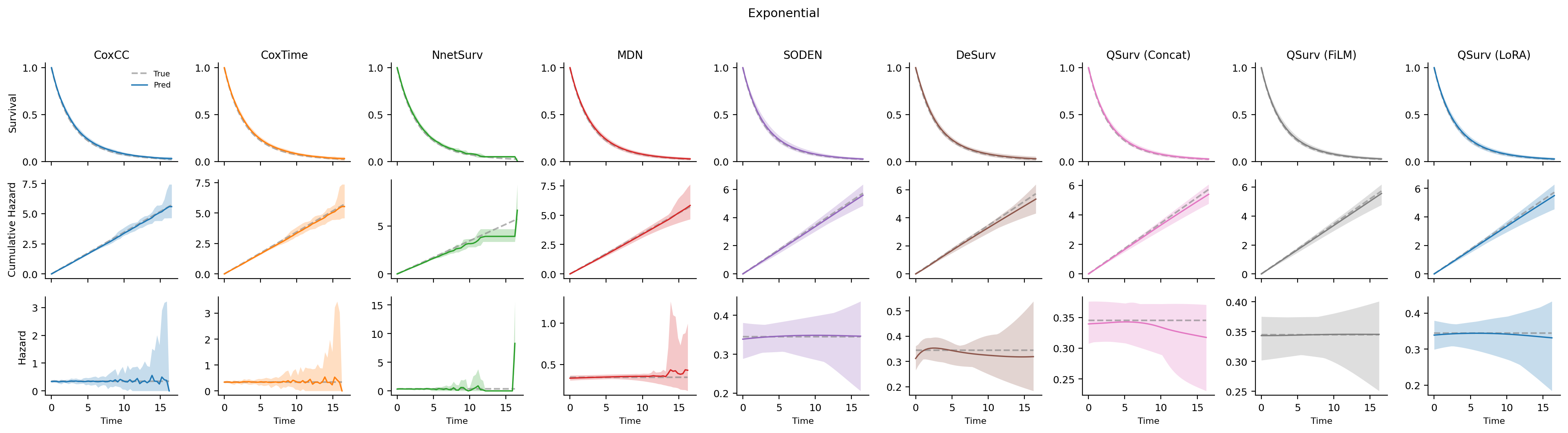}
    \caption{True vs. predicted survival, cumulative hazard, and instantaneous hazard functions for exponential distribution scenario. Shades are 95\% confidence interval.}
    \label{fig:sim_exponential}
\end{figure}

\begin{figure}[!p]
    \centering
    \includegraphics[width=0.99\linewidth]{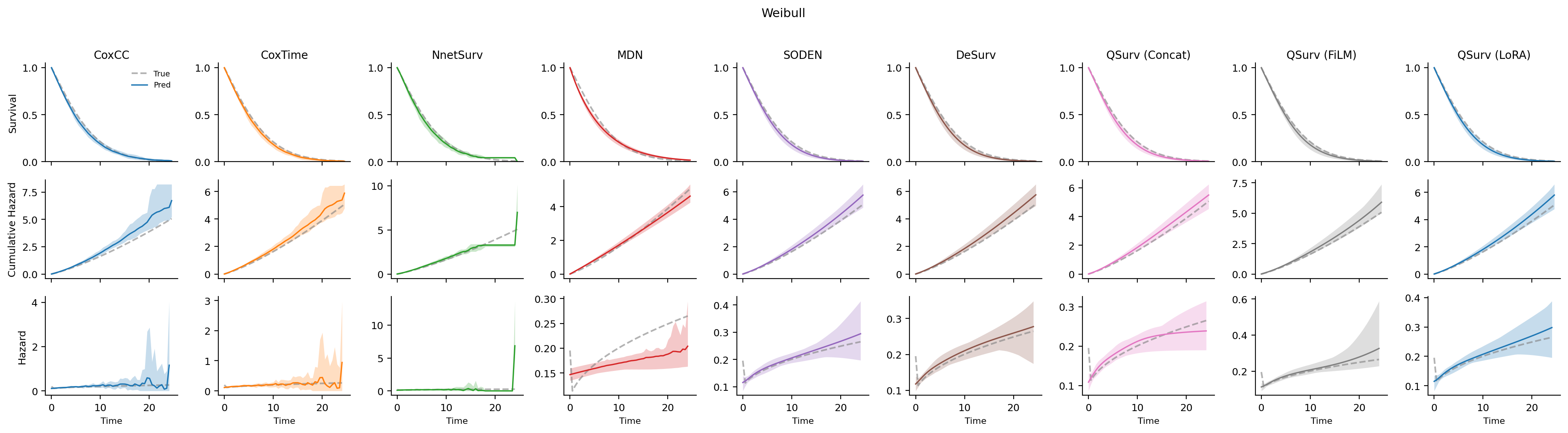}
    \caption{True vs. predicted survival, cumulative hazard, and instantaneous hazard functions for Weibull distribution scenario. Shades are 95\% confidence interval.}
    \label{fig:sim_weibull}
\end{figure}

\begin{figure}[!p]
    \centering
    \includegraphics[width=0.99\linewidth]{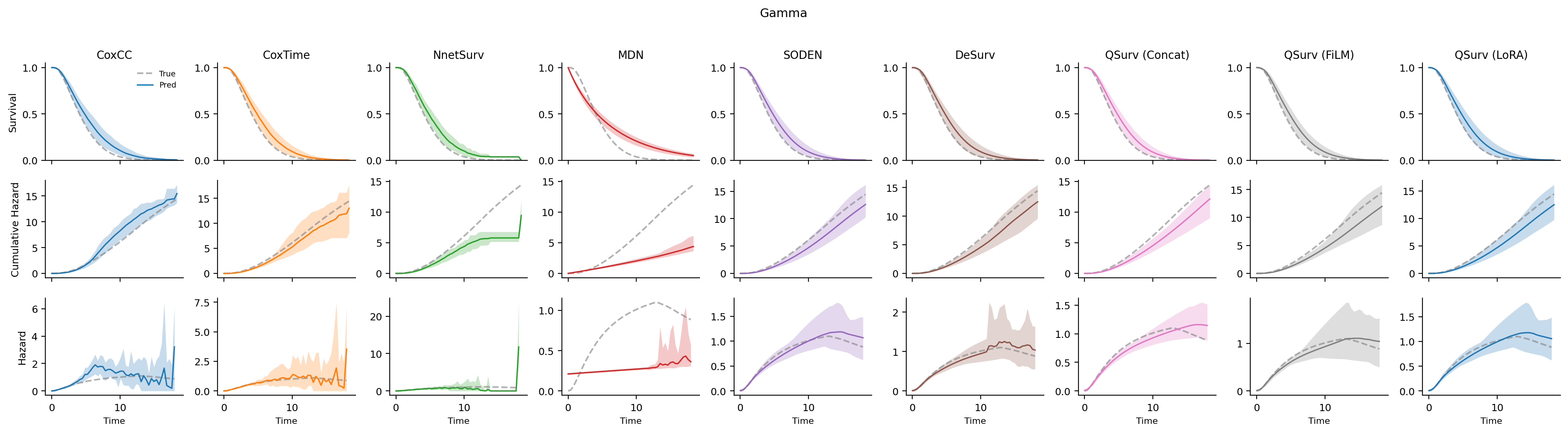}
    \caption{True vs. predicted survival, cumulative hazard, and instantaneous hazard functions for gamma distribution scenario. Shades are 95\% confidence interval.}
    \label{fig:sim_gamma}
\end{figure}

\begin{figure}[!p]
    \centering
    \includegraphics[width=0.99\linewidth]{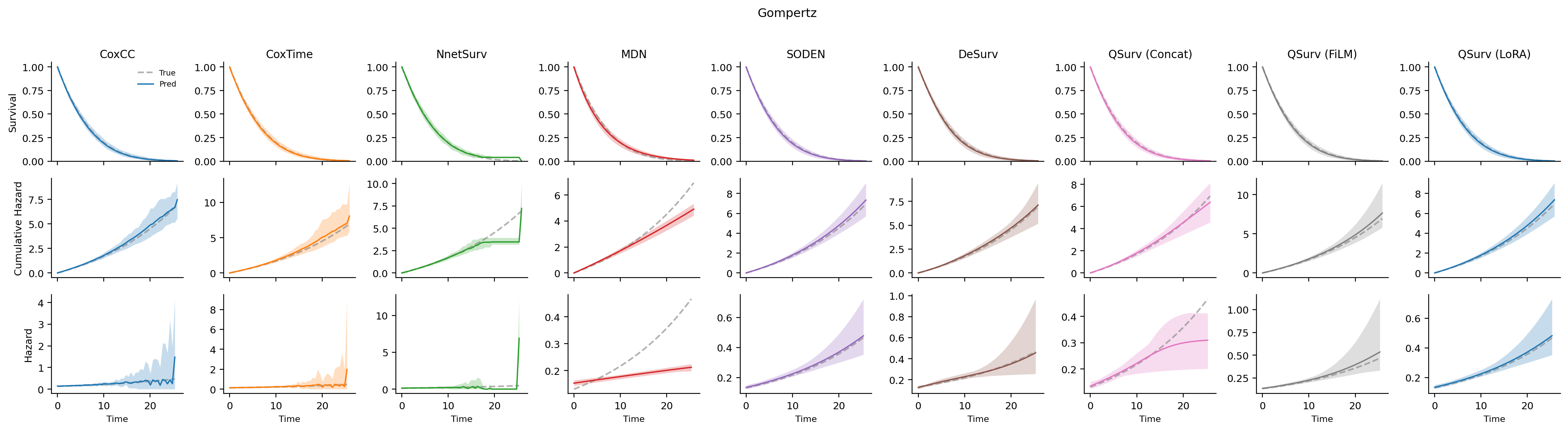}
    \caption{True vs. predicted survival, cumulative hazard, and instantaneous hazard functions for Gompertz distribution scenario. Shades are 95\% confidence interval.}
    \label{fig:sim_gompertz}
\end{figure}

\begin{figure}[!p]
\centering
    \includegraphics[width=0.99\linewidth]{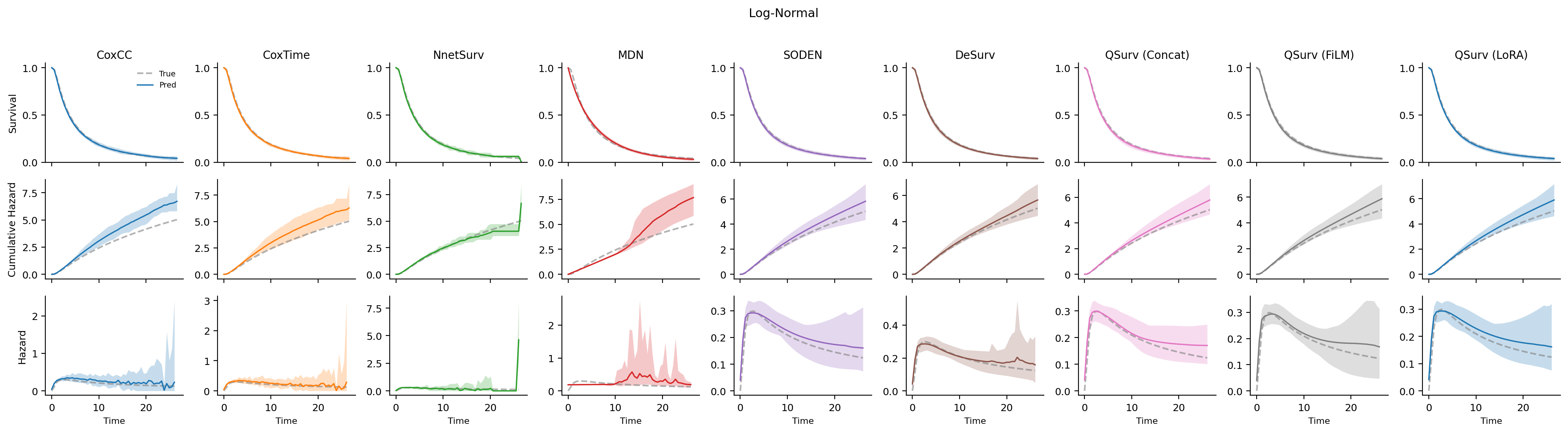}
    \caption{True vs. predicted survival, cumulative hazard, and instantaneous hazard functions for log-normal distribution scenario. Shades are 95\% confidence interval.}
    \label{fig:sim_lognormal}
\end{figure}

\begin{figure}[!p]
    \centering
    \includegraphics[width=0.99\linewidth]{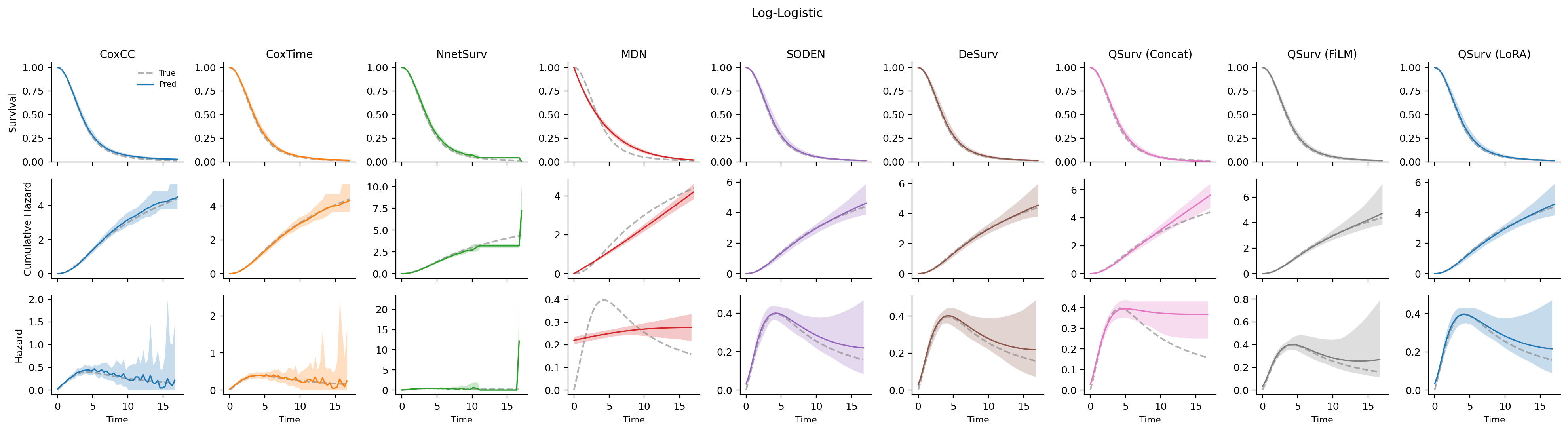}
    \caption{True vs. predicted survival, cumulative hazard, and instantaneous hazard functions for log-logistic distribution scenario. Shades are 95\% confidence interval.}
    \label{fig:sim_loglogistic}
\end{figure}



\end{document}